\documentclass[runningheads]{llncs}

\usepackage[T1]{fontenc}
\usepackage{graphicx}

\usepackage{multirow}   % Needed for \multirow
\usepackage{array}      % Better column formatting
\usepackage{booktabs} 
\usepackage{amsmath}
\usepackage{amssymb}
\usepackage{tikz}
\usepackage{algorithm}
\usepackage{algpseudocode}
\usepackage{graphicx}
\usepackage{subcaption}
\usepackage{xcolor}
\usepackage{wrapfig}
\usepackage{tcolorbox}

\newcommand{\step}[1]{\tikz[baseline=(char.base)]{
    \node[shape=circle,draw,inner sep=1pt] (char) {#1};}}

\begin{document}

\title{IntraShuffler: A Privacy Preserving Framework for Heterogeneous DP Federated Learning}

\titlerunning{IntraShuffler}

% \author{Anonymous Author(s)}

% \authorrunning{Anonymous}

% \institute{Anonymous Institution}
% \author{Anonymous Author(s)}
% \institute{Anonymous Institution(s)}

\author{Farhin Farhad Riya\inst{1}
\and
Olivera Kotevska\inst{2}
\and
Jinyuan Stella Sun\inst{1}
}
\authorrunning{F. Riya et al.}
% First names are abbreviated in the running head.
% If there are more than two authors, 'et al.' is used.
%
\institute{University of Tennessee, Knoxville, USA \and
Oak Ridge National Laboratory, USA \\
\email{friya@vols.utk.edu}; \email{kotevskao@ornl.gov}; \email{jysun@utk.edu}
}

\maketitle         
\pagestyle{plain}

\begin{abstract}

Heterogeneous Differential Privacy (HDP) in Federated Learning (FL) allows clients to select individual privacy budgets ($\varepsilon_i$) according to institutional policies and data sensitivity. In practice, many HDP-FL systems employ $\varepsilon$-aware server aggregation to improve model utility by re-weighting client updates according to their declared privacy budgets. However, gradient updates in FL retain structural patterns induced by non-independent and identically-distributed (non-IID) data, and these additional signals exposed by $\varepsilon$-aware aggregation create new opportunities for inference by an honest-but-curious server. In this work, we first show that a server equipped with gradient denoising and surrogate modeling can mount a \emph{Privacy Inference Attack} that infers distributional attributes of clients and links updates from the same client across training rounds,  measured via surrogate inference accuracy and linkage success, under realistic knowledge constraints.

The Shuffle-Model has been widely studied as a defense against such inference risks by anonymizing update sources, but it is fundamentally incompatible with HDP-FL $\varepsilon$-aware aggregation. To address this challenge, we propose \textbf{IntraShuffler}, a middleware defense framework designed for HDP-FL systems. IntraShuffler introduces a privacy-aware shuffling mechanism that groups clients into privacy-compatible buckets and performs parameter-level shuffling within each bucket to disrupt persistent gradient structure while preserving $\varepsilon$-aware aggregation. Experiments across four different datasets show that IntraShuffler reduces gradient recoverability by over \textbf{60\%} and decreases surrogate inference accuracy from \textbf{0.78} to \textbf{0.33} while maintaining comparable model utility across multiple FL aggregation rules.

% \keywords{Federated Learning  \and Heterogeneous Differential Privacy \and Shuffler \and Privacy Leakage \and Privacy Defense \and Bucketing}
\end{abstract}

% \vspace{-0.3em}
\section{Introduction}
\vspace{-0.7em}

Federated Learning (FL) enables collaborative model training across distributed clients without sharing raw data, where clients compute local updates that are aggregated by a central server \cite{zhang2021survey,mammen2021federated}. Despite this decentralized design, prior work shows that shared updates can still leak sensitive information about client data and behavior \cite{melis2019exploiting,geiping2020inverting}. Differential Privacy (DP) is widely adopted to mitigate such risks by adding noise to client updates \cite{abadi2016deep,wei2020federated}. In practice, clients often have heterogeneous privacy requirements, motivating Heterogeneous Differential Privacy (HDP), where each client selects an individual privacy budget $\varepsilon_i$ \cite{HDP-FL-li2023,ling2024efficient}. To ensure convergence under heterogeneous noise, many HDP-FL systems adopt $\varepsilon$-aware aggregation that re-weights updates based on their declared privacy budgets. However, privacy risks persist as gradient updates retain structural patterns induced by non-IID data, and $\varepsilon$-aware aggregation introduces additional signal that can be exploited to enable inference. This raises an important question: \emph{Can an honest-but-curious server infer client participation or distributional attributes with these signals even when LDP is applied?} For example, a client with a large privacy budget ($\varepsilon=8$) produces consistently low-noise updates with similar directionality across rounds, enabling linkage even under shuffling. We identify that the root cause of privacy leakage in HDP-FL is not insufficient noise or lack of anonymization, but the persistence of client-specific gradient structure under $\varepsilon$-aware aggregation.

In this work, we demonstrate that such inference is indeed possible in HDP-FL systems. Leveraging two well-established analysis capabilities, gradient denoising techniques that partially recover gradient directionality and surrogate modeling that captures client data characteristics, we develop a unified \emph{Privacy Inference Attack} framework that enables the server to infer the update source and data distributional attributes under realistic knowledge constraints.

A natural defense against such attacks is the shuffle-model, where an intermediary shuffler anonymizes the origin of client updates before they reach the server \cite{cheu2019distributed,erlingsson2019amplification,bittau2017prochlo}. Shuffle-based defenses reduce linkability by permuting messages and have been widely studied in homogeneous privacy settings. However, these approaches are not directly compatible with HDP-FL systems that rely on $\varepsilon$-aware server aggregation. Because $\varepsilon$-aware aggregation requires associating each update with its privacy tier, message-level anonymization cannot remove identity-linked signals without also removing the weights required for correct aggregation. As a result, existing shuffle-based defenses are insufficient to prevent inference attacks in HDP-FL settings.

To address this challenge, we propose \textbf{IntraShuffler}, a lightweight middleware defense designed for HDP-FL systems. Rather than eliminating $\varepsilon$-aware aggregation, IntraShuffler preserves it while disrupting the structural signals that enable inference. The key insight is that non-IID data induces persistent structural patterns in gradient updates. When updates can be statistically linked across rounds, these patterns enable inference about client participation and data characteristics. IntraShuffler mitigates this risk by strategically grouping clients into privacy-compatible buckets and performing parameter-level shuffling within each bucket. The bucketing mechanism ensures sufficient anonymity under dynamic client participation while preserving compatibility with $\varepsilon$-aware aggregation, while the intra-bucket parameter-level shuffling disrupts recurring gradient structures that enable cross-round linkage.

While several other defenses have been proposed in the literature to mitigate inference risks in FL, such as Secure Aggregation, Homomorphic Encryption, Secure Multi-party Computation, etc., many require modifying either client training procedures or server aggregation mechanisms or both. In contrast, shuffle-based architectures introduce an anonymization layer between clients and the server without altering local training objectives or aggregation protocols, making them attractive for deployment. This motivated us to propose a defense that follows a similar middleware architecture and can be integrated into existing FL systems without modifying client training pipelines or server optimizers. Consequently, it remains compatible with widely used federated optimizers such as \textsc{FedAvg}, \textsc{FedOpt}, and \textsc{FedProx}. Through extensive experiments on both vision and time-series workloads, we demonstrate that IntraShuffler significantly reduces gradient-based inference risks while maintaining model utility.

Again, removing $\varepsilon$-aware aggregation from the FL setting is not a practical solution, as such mechanisms are commonly adopted to maintain convergence and model utility under HDP (e.g., Projection-based Aggregation methods ~\cite{osti_10332814}, Weighted Aggregation methods \cite{ling2024efficient}, Client Sampling methods \cite{ma2025power,xu2026optimal}). Our goal is therefore not to restrict the server's access to aggregation signals, but to prevent the server from exploiting these signals to infer sensitive properties of participating clients.

% \vspace{+0.5em}
The main contributions of this paper can be summarized as follows:
\vspace{-0.5em}
\begin{itemize}
\item We introduce a new \emph{Privacy Inference Attack} that enables an honest-but-curious server to infer client participation and data distributional attributes in HDP-FL using gradient denoising and surrogate modeling.
\item We identify that privacy leakage in HDP-FL stems from persistent client-specific gradient structure under $\varepsilon$-aware aggregation, which enables cross-round linkage and inference despite local DP and shuffling.

\item We propose \textbf{IntraShuffler}, a privacy-aware bucketing and parameter-shuffling defense, to disrupt persistent gradient structure while preserving compatibility with $\varepsilon$-aware aggregation.

\item We evaluate IntraShuffler across vision and time-series tasks, showing over 60\% reduction in gradient recoverability, a drop in surrogate inference accuracy from 0.78 to 0.33, and near-random cross-round source inference, while maintaining comparable model utility. We further report bucket-level attack success rates, revealing higher vulnerability for high-$\varepsilon$ clients and demonstrating that IntraShuffler mitigates this imbalance.
% \item We empirically evaluate IntraShuffler across vision and time-series tasks, showing over 60\% reduction in gradient recoverability, a drop in surrogate inference accuracy from 0.78 to 0.33, and near-random cross-round source inference, while maintaining comparable model utility. We further report bucket-level attack success rates that reveal a privacy imbalance in HDP-FL systems, where higher-$\varepsilon$ clients are more vulnerable, and show that IntraShuffler substantially mitigates this leakage across privacy buckets.

% \item \textbf{Key Insight.}
% We identify that privacy leakage in HDP-FL stems from persistent client-specific gradient structure under $\varepsilon$-aware aggregation, which enables cross-round linkage and inference despite local DP and shuffling.

% \item \textbf{Method.}
% We propose \textbf{IntraShuffler}, a privacy-aware bucketing and parameter-level shuffling framework that disrupts structural signals while preserving compatibility with $\varepsilon$-aware aggregation.

% \item \textbf{Empirical Validation.}
% We evaluate IntraShuffler across vision and time-series tasks, showing over 60\% reduction in gradient recoverability, a drop in surrogate inference accuracy from 0.78 to 0.33, and near-random cross-round source inference, while maintaining comparable model utility. We further report bucket-level attack success rates, revealing higher vulnerability for high-$\varepsilon$ clients and demonstrating that IntraShuffler mitigates this imbalance.

\end{itemize}

\vspace{-1.0em}
\section{Related Work}
\vspace{-0.5em}

\noindent\textbf{Inference Attacks in Federated Learning.}
Although gradients were initially shown to enable reconstruction attacks such as Deep Leakage from Gradients and its variants~\cite{zhu2019deep,geiping2020inverting}, subsequent work demonstrated that gradient updates can also leak higher-level information about participating clients. Prior studies have shown that adversaries can infer attributes of client data or training membership from shared updates. For example, Melis et al.~\cite{melis2019exploiting} showed that gradient updates may reveal client-level attributes and behavioral properties in collaborative learning systems, while Nasr et al.~\cite{nasr2019comprehensive} demonstrated that FL remains vulnerable to membership inference attacks (MIAs) even when secure aggregation is applied. More recently, source inference attacks (SIAs) have been proposed to identify the client responsible for a particular update by exploiting persistent gradient patterns and distributional differences across clients~\cite{hu2021source,hu2023source}. These works highlight that gradient-level signals can enable inference about participating clients even when raw data are never directly shared.

\noindent\textbf{Shuffle-Based Privacy Mechanisms.}
Shuffling improves privacy in distributed systems by anonymizing the origin of
client messages. Its roots trace back to anonymity networks and cryptographic
mixing protocols~\cite{dingledine2004tor,hohenberger2014mixnets,corriangibbs2015riposte,lu2019shuffle,abraham2020mpcshuffle}, and it was later introduced to DP through Prochlo~\cite{bittau2017prochlo} and formalized in the shuffle-model literature~\cite{cheu2019distributed,erlingsson2019amplification,wang2025rafls}. In this model, clients locally perturb messages before a shuffler permutes them, providing anonymity and
privacy amplification. Shuffle-based designs have been used in practical
systems such as Google and Brave~\cite{bittau2017prochlo} and explored in FL to
reduce update linkability or strengthen DP guarantees~\cite{sun2020secure,lebun2022shuffle,yang2023shufflefl,liu2021flame}. However, most shuffle-based FL defenses assume homogeneous DP~\cite{abadi2016deep}, whereas practical FL
deployments often involve HDP with client-specific privacy budgets and
$\varepsilon$-aware aggregation~\cite{jorgensen2015conservative,lin2023heterogeneous}. Recent HDP shuffle studies focus mainly on privacy accounting~\cite{balcer2022shuffle,liu2023echo}, leaving the empirical inference-attack risk in HDP-FL less explored.

He et al.~\cite{he2023clustered} introduce a form of parameter-level shuffling in FL, where each client locally shuffles model weights by attaching identifiers and locations before transmission. However, this mechanism does not follow the shuffle-model architecture; the server later reconstructs the model by regrouping the labeled weights, leaving the underlying gradient structure intact.

To the best of our knowledge, the recent ICLR’26 work of Athanasiou et al.~\cite{Athanasiou_2024} is the first to study parameter-level shuffling in the shuffle-model for FL. They demonstrate that such shuffling can remain vulnerable to SIA via recombination of final-layer parameters, under strong assumptions including prior knowledge of the target client’s data distribution and no client-side local-DP (LDP). The authors note that the attack is not viable without this knowledge. In contrast, we consider client-side DP guarantee with a stricter and more practical threat model and show that under standard model architectures, such recombination attacks become significantly infeasible.

% Our work, therefore, investigates inference risks in HDP-FL systems employing $\varepsilon$-aware aggregation and introduces \textbf{IntraShuffler}, a parameter-level shuffle-based defense designed to disrupt persistent gradient structures while preserving compatibility with HDP settings and standard FL aggregation protocols.

\vspace{-0.5em}
\section{Preliminaries}
\label{sec:prelim}
\vspace{-0.5em}

\noindent\textbf{Notation.}
Let $[m]:= \{1,\dots,m\}$ denote the set of clients participating in a federated learning (FL) system. For a vector $x \in \mathbb{R}^d$, $x[j]$ denotes its $j$-th coordinate and $\|x\|_2$ denotes the Euclidean norm.

\vspace{+0.5em}
\noindent\textbf{Federated Learning with Heterogeneous Differential Privacy.} We consider a standard FL setting with $m$ clients and a central server. At each communication round $t \in [T]$, a subset of clients $\mathcal{C}_t \subseteq [m]$ participates in training. Each client $i \in \mathcal{C}_t$ holds a local dataset $D_i$ drawn from a client-specific distribution $\mathcal{P}_i$. Given the current global model parameters $w^{(t)}$, client $i$ computes a local update (e.g., gradient or model delta)
\(\displaystyle
g_i^{(t)} \in \mathbb{R}^d .
\)To protect client data, the update is perturbed locally before transmission to the server. In HDP, each client selects an individual privacy budget $\varepsilon_i$, resulting in client-specific noise levels. A common instantiation uses a Gaussian perturbation,
\begin{equation}
\tilde g_i^{(t)} = g_i^{(t)} + \xi_i^{(t)}, 
\qquad
\xi_i^{(t)} \sim \mathcal{N}(0,\sigma_i^2 I_d)
\label{eq:hpd-noise}
\end{equation}
where gradients are first clipped to a norm bound $C$ and Gaussian noise is added, with $\sigma_i$ calibrated to satisfy $(\varepsilon_i,\delta)$-DP under a standard DP-SGD mechanism~\cite{abadi2016deep}. The server is assumed to know the DP mechanism (including $C$ and $\delta$) and  receives the perturbed updates $\tilde g_i^{(t)}$ with its associated privacy level $\varepsilon_i$

\vspace{+0.5em}
\noindent\textbf{$\varepsilon$-Aware Aggregation in HDP-FL.} Because HDP introduces heterogeneous noise levels across clients, many HDP-FL systems employ \emph{$\varepsilon$-aware aggregation} to maintain model utility and training stability. In this setting, the server aggregates client updates using weights that depend on the corresponding privacy tiers or noise levels. Let $\alpha_i^{(t)}$ denote the aggregation weight assigned to client $i$ in round $t$. The global model update can be written as
\begin{equation}
w^{(t+1)} = w^{(t)} +
\sum_{i\in\mathcal{C}_t} \alpha_i^{(t)} \tilde g_i^{(t)}
\label{eq:eps-agg}
\end{equation}

The weights $\alpha_i^{(t)}$ depend on the client's declared privacy budget $\varepsilon_i$, from which the server can infer the corresponding noise scale $\sigma_i$ under the known DP mechanism (assigning lower weight to noisier updates)~\cite{ling2024efficient,li2022hdpfl}. In this work, we assume aggregation relies only on this privacy-level information.

\vspace{+0.5em}
\noindent\textbf{Shuffle-Inspired Middleware Architecture.}
To reduce linkability between client updates and their sources, some systems
place a \emph{shuffler} between clients and the server. The shuffler permutes or
transforms updates before aggregation. Formally, let
\(\tilde g_{1:n_t}^{(t)}=\{\tilde g_i^{(t)}:i\in\mathcal{C}_t\}\)
denote the perturbed updates in round $t$, where $n_t=|\mathcal{C}_t|$.
A classical shuffle mechanism applies a random permutation $\pi$,
\(
\mathsf{Shuf}(\tilde g_{1:n_t}^{(t)}) =
(\tilde g_{\pi(1)}^{(t)},\dots,\tilde g_{\pi(n_t)}^{(t)}),
\)
removing the direct client--message correspondence. However, in HDP-FL, the
server must retain each update's privacy tier for $\varepsilon$-aware
aggregation, making naive message-level shuffling potentially incompatible with
HDP aggregation. This motivates shuffle-inspired mechanisms that reduce
linkability while preserving aggregation-relevant privacy information.
% \noindent\textbf{Shuffle-Inspired Middleware Architecture.} To reduce the linkability between client updates and their originating clients, some systems introduce an intermediate \emph{shuffler} between clients and the server. The shuffler acts as a middleware component that transforms or permutes updates before they reach the server. Formally, let
% \(\displaystyle
% \tilde g_{1:n_t}^{(t)} =
% \{\tilde g_i^{(t)} : i \in \mathcal{C}_t\}
% \),
% denote the set of perturbed updates generated in round $t$, where $n_t = |\mathcal{C}_t|$. A classical shuffle mechanism applies a random permutation $\pi$ over these updates,
% \(\displaystyle
% \mathsf{Shuf}(\tilde g_{1:n_t}^{(t)}) =
% (\tilde g_{\pi(1)}^{(t)},\dots,\tilde g_{\pi(n_t)}^{(t)}),
% \), thereby removing the direct correspondence between clients and messages. In HDP-FL systems, however, the server must retain information about the privacy tier associated with each update in order to perform $\varepsilon$-aware aggregation. As a result, naive message-level shuffling may be incompatible with HDP aggregation rules. This motivates the design of shuffle-inspired mechanisms that transform updates while preserving the information required for aggregation.

\vspace{-0.5em}
\section{Threat Model}
\label{sec:threat}
\vspace{-0.6em}

\noindent\textbf{Adversary.}
We consider an \emph{honest-but-curious} server that follows the prescribed protocol, including $\varepsilon$-aware aggregation (Eq.~\ref{eq:eps-agg}), but attempts to infer client-specific information from observed updates.

\textbf{- Observations.}
At each communication round $t$, a subset of clients $\mathcal{C}_t$
participates and releases locally perturbed updates following
Eq.~(\ref{eq:hpd-noise}). After shuffling, the server observes an anonymized
update set $\mathcal{Y}_t$, and over training observes the sequence
$(\mathcal{Y}_1,\dots,\mathcal{Y}_T)$.
% At each communication round $t$, a subset of clients $\mathcal{C}_t$ participates and releases locally perturbed updates following Eq.~(\ref{eq:hpd-noise}). After passing through the Shuffler described in Section~\ref{sec:prelim}, the server observes an anonymized set of updates $\mathcal{Y}_t$. Across training rounds, the server observes the sequence $(\mathcal{Y}_1,\dots,\mathcal{Y}_T)$.

\textbf{- Knowledge.}
The adversary knows the training protocol, model architecture, and the privacy tier associated with each received update (e.g., $\varepsilon_i$ or the corresponding noise scale $\sigma_i$), as required for $\varepsilon$-aware aggregation. The server may also access auxiliary or historical data from similar domains for offline analysis, but does not access raw data, clean gradients, or shuffler randomness.

% \vspace{+0.5em}
\noindent\textbf{Trust Assumptions.}
Clients follow the protocol honestly and apply LDP with chosen privacy budgets. Clients send $(\tilde g_i^{(t)}, \varepsilon_i)$ to the Shuffler, which outputs a permuted multiset of updates with their associated tier identifiers to the server. The shuffler is treated as a trusted, non-colluding middleware component and can be instantiated as a single entity or via distributed shuffling (e.g., Prochlo \cite{bittau2017prochlo}). If the shuffler is malicious or colludes with the server, guarantees reduce to baseline LDP without additional anonymity (Appendix~\ref{app:malicious_shuffler}).

This work focuses on inference risks arising from the statistical structure of gradient updates under HDP. Active attacks such as poisoning, protocol deviation, side-channel attacks, or client collusion are outside the scope of this work.

\vspace{-0.7em}
\section{Privacy Inference Attack}
\label{sec:attack}
\vspace{-0.7em}
% \subsection{Attack Motivation}

Prior SIA~\cite{hu2021source,hu2023source,Athanasiou_2024} attacks often assume strong adversarial knowledge, such as access to client data distributions or exact training datasets. In contrast, our attack relies solely on protocol-level information and anonymized updates. We demonstrate that even under these limited assumptions, structural signals in gradient updates can still enable privacy inference despite LDP and shuffle anonymization.

\vspace{-0.7em}
\subsubsection{Structural Leakage in Gradient Updates}
\vspace{-0.7em}
Shuffle-model \cite{cheu2019distributed,erlingsson2019amplification,bittau2017prochlo} provides source anonymity by removing the direct mapping between updates and client identities. However, the gradient update itself may still contain structural signals reflecting the underlying client data distribution. Let $\mathcal{P}_i$ denote the data distribution of client $i$. Under standard smoothness assumptions, the expected update direction depends on this distribution,
\(\displaystyle
\mu_i^{(t)} := \mathbb{E}[g_i^{(t)}] \approx \nabla F(\theta^{(t)}; \mathcal{P}_i).
\) When client datasets are non-IID, these expected update directions vary across clients. Even after local perturbation, the observed update remains a noisy observation of this distribution-dependent structure. This structural signal forms the basis of the privacy inference attack.

\begin{figure} [t]
    \centering
    \includegraphics[width=0.7\columnwidth]{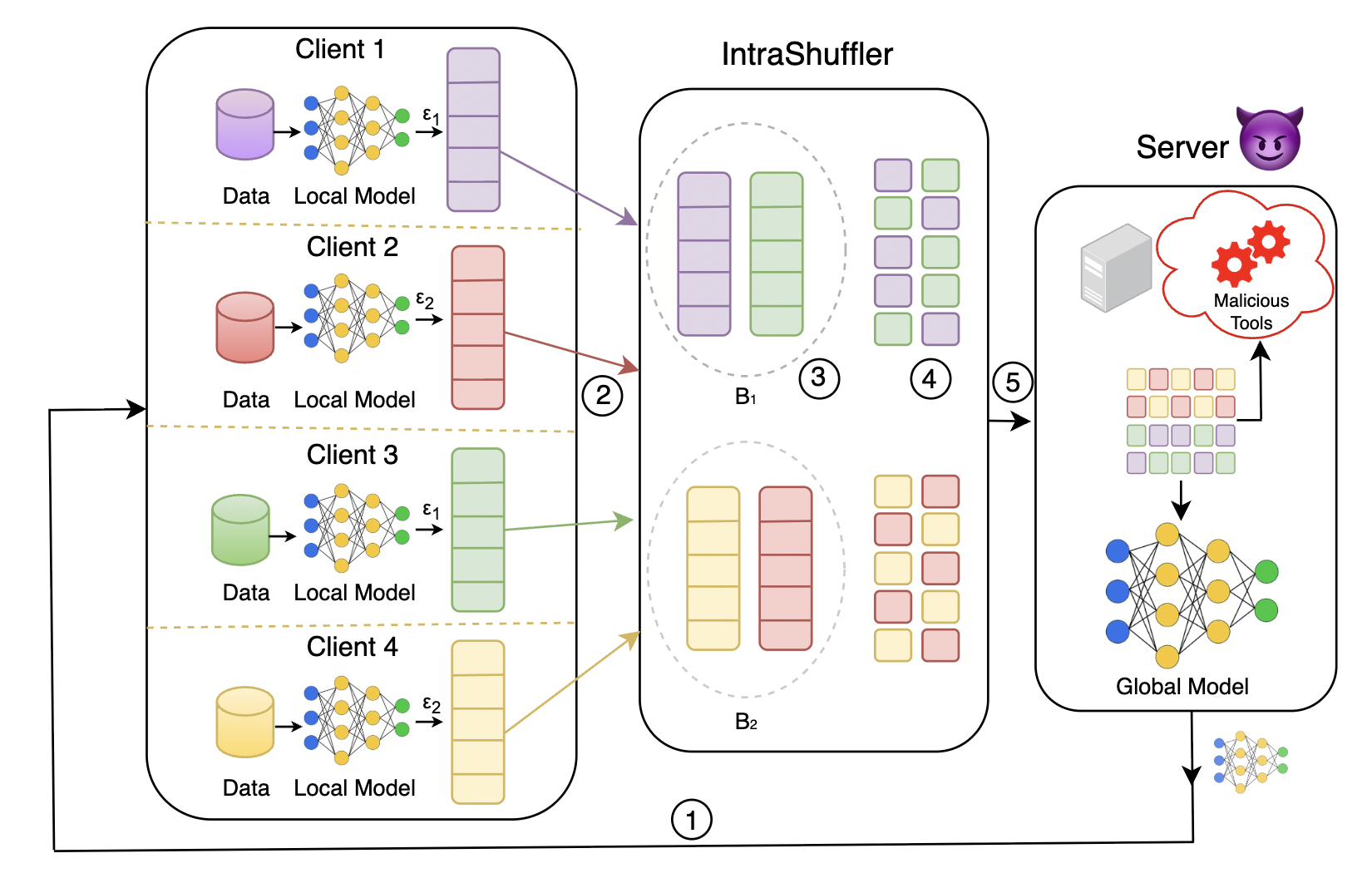}
    \vspace{-0.5em}
    \caption{Overview of the IntraShuffler Framework.}
    \label{fig:Overview}
\end{figure}

\vspace{-0.3em}
\subsubsection{Attack Methodology}
\vspace{-0.5em}
The attacker aims to infer coarse properties of the underlying client data distribution from the observed gradient structure and to link updates generated by the same client across training rounds.

\noindent\textbf{Stage 1. Gradient Denoising.}
Under HDP, clients perturb updates using noise levels determined by their privacy budgets. Because the server observes both the noisy update $\tilde g_i^{(t)}$ and the associated privacy budget $\varepsilon_i$, and because the training protocol specifies the clipping bound and noise calibration mechanism, the server can infer the corresponding noise scale. The attacker employs an $\varepsilon$-aware learned denoiser $f_\theta(\tilde g, \varepsilon)$ trained on synthetic pairs of clean and noisy gradients generated using the known DP mechanism. The model is trained to minimize reconstruction error (e.g., $\ell_2$ loss) and learns to map noisy updates to their denoised estimates $\hat g = f_\theta(\tilde g, \varepsilon)$, partially restoring distribution-dependent structural patterns.

The denoising step is not unique to HDP, since a server in a homogeneous-DP
setting could also train a fixed-noise denoiser if the common privacy budget is known. The HDP-specific risk is that denoising becomes privacy-tier adaptive. Different clients use different $\varepsilon_i$ values, and the server-visible privacy tier lets the attacker condition recovery on each update's noise scale. Consequently, high-$\varepsilon$ updates contain less noise and become more recoverable and linkable than low-$\varepsilon$ updates. Thus, the attack exploits the combination of heterogeneous noise scales, server-visible privacy tiers, and non-IID gradient structure, rather than denoising alone.

\noindent\textbf{Stage 2. Behavioral Inference via Surrogate Model.}
The attacker employs a surrogate model trained on auxiliary data from a similar domain (e.g., public datasets or historical updates). Behavioral labels correspond to coarse data attributes (e.g., class distribution or domain category), derived directly from the auxiliary dataset. The number of classes is task-dependent and aligned with the downstream prediction objective. The denoised updates $\hat g$ are used as input features, enabling the model to associate gradient structure with these coarse distributional properties.

\noindent\textbf{Stage 3. Cross-Round Linkage.}
When the same client participates across multiple rounds, the distribution-dependent update direction remains statistically correlated over time. The attacker measures similarity between denoised updates across rounds using cosine similarity,
\(
s(\hat g^{(t)}, \hat g^{(t')}) =
\frac{\hat g^{(t)} \cdot \hat g^{(t')}}{\|\hat g^{(t)}\|\,\|\hat g^{(t')}\|}.
\) Updates are linked by nearest-neighbor matching based on the similarity score across rounds, enabling the attacker to associate updates and track inferred behavioral properties over time. We consider a closed-world setting where the candidate set consists of updates within a fixed window of rounds, allowing multiple participations per client and linkage performance is evaluated using top-1 matching accuracy over the candidate set.

\vspace{+0.5em}
\noindent\textbf{Implication for Defense.}
The attack reveals that privacy leakage arises not from the absence of noise or shuffling, but from the persistence of client-dependent gradient structure. While shuffling removes explicit identifiers, it does not alter the internal geometry of updates, leaving distribution-dependent patterns detectable after denoising and across rounds. Effective defenses must therefore disrupt this structural consistency while remaining compatible with $\varepsilon$-aware aggregation, motivating the design of \textit{IntraShuffler}.

% While shuffling removes explicit client identifiers, it does not modify the internal geometry of gradient updates. As a result, distribution-dependent patterns may remain detectable after denoising and across training rounds. Effective defenses, therefore, must disrupt the structural consistency of gradient representations while remaining compatible with $\varepsilon$-aware aggregation. This observation motivates the design of the proposed \textit{IntraShuffler} mechanism.

\vspace{-0.5em}
\section{IntraShuffler Framework}
\label{sec:intrashuffler}
\vspace{-0.5em}

Motivated by the structural leakage identified in Section~\ref{sec:attack}, we propose \textbf{IntraShuffler}, a shuffle-based middleware mechanism that disrupts client-dependent gradient structure while remaining compatible with HDP-FL systems that employ $\varepsilon$-aware aggregation. The mechanism operates between clients and the server and does not modify the client-side training procedures or the local DP mechanisms applied by clients.

\vspace{0.4em}
\noindent\textbf{Design intuition.}
IntraShuffler reduces the ability of the server to reconstruct coherent per-client gradients by shuffling parameters across multiple clients before updates reach the server. The strength of this protection depends on the size of the mixing set: when too few clients are mixed together, parameter recombination becomes easier. This motivates enforcing a minimum bucket population to ensure sufficiently large anonymity sets for parameter-level shuffling.

\begin{definition}[Initial Privacy Buckets]
Let clients be indexed by $i \in [m]$, each declaring a privacy budget $\varepsilon_i$. At communication round $t$, the participating clients $\mathcal{C}_t$ are first grouped according to their declared privacy budgets \(
\mathcal{T}_\ell^{(t)} := \{ i \in \mathcal{C}_t : \varepsilon_i = \varepsilon^{(\ell)} \},
\) where $\{\varepsilon^{(\ell)}\}$ denotes the set of privacy budget values appearing among the participating clients in that round. Thus, all clients within an initial group share the same privacy budget and corresponding nominal local noise scale.
\end{definition}

These initial groups preserve the privacy budget information required by the server in HDP-FL systems. However, some groups may contain very few clients, which provides only limited protection under parameter-level shuffling.

\vspace{0.5em}
\noindent\textbf{Adaptive shuffling buckets and minimum population.}
Let $n_k := |\mathcal{B}_k|$ denote the size of a final shuffling bucket $\mathcal{B}_k$. IntraShuffler enforces a minimum bucket population $n_{\min}$, treated as a design hyperparameter controlling the minimum anonymity-set size required for effective shuffling. Buckets are ordered by privacy budget, and adjacency refers to neighboring buckets in this ordered sequence. If an initial group satisfies $|\mathcal{T}_\ell^{(t)}| \ge n_{\min}$, it is used directly; otherwise, it is merged with an adjacent group to form a larger bucket. The merge candidate is chosen to minimize privacy distance, preferring the smaller neighbor and breaking ties toward the lower-$\varepsilon$ group. The final buckets $\{\mathcal{B}_1,\dots,\mathcal{B}_K\}$ are obtained by iteratively merging underpopulated adjacent groups until $n_k \ge n_{\min}$. This adaptive merging introduces a privacy--utility trade-off where larger buckets improve anonymity but may mix updates with different noise levels, reducing the benefits of fine-grained privacy grouping.

\begin{definition}[Parameter-Level Shuffling]
Let $\tilde g_i^{(t)} \in \mathbb{R}^d$ denote the locally private update generated by client $i$ at round $t$. For a final shuffling bucket $\mathcal{B}_k$, define the set of privatized updates
\(
\mathcal{G}_k^{(t)} := \{ \tilde g_i^{(t)} : i \in \mathcal{B}_k \}.
\) IntraShuffler applies independent random permutations to parameter positions across clients within the bucket. For each parameter index $j \in \{1,\dots,d\}$, a permutation $\pi_{k,j}^{(t)}$ over the client indices in $\mathcal{B}_k$ is sampled and the shuffled updates are constructed as
\(
\hat g_i^{(t)}[j] = \tilde g_{\pi_{k,j}^{(t)}(i)}^{(t)}[j].
\) Thus, each output update $\hat g_i^{(t)}$ is no longer a coherent per-client gradient vector, but a parameter-wise mixture of values drawn from clients within the same shuffling bucket.
\end{definition}

\noindent Parameter-level shuffling is important because shuffling at coarser granularity (e.g., layer-wise) preserves intra-layer parameter correlations. Since inference attacks exploit these fine-grained structures, such coarse shuffling provides limited protection. In contrast, parameter-level shuffling disrupts these dependencies, effectively removing the signals required for reliable inference. For a model with layers $C, FC_1, FC_2$, layer-wise shuffling permutes whole layers across clients, \(\{C_{\pi_0(i)}, FC_{1,\pi_1(i)}, FC_{2,\pi_2(i)}\}_{i=1}^n\), preserving intra-layer grouping, whereas parameter-level shuffling permutes individual parameters.

\begin{algorithm}[t]
\caption{IntraShuffler with adaptive minimum-population shuffling buckets}
\label{alg:intrashuffler}
\scriptsize
\begin{algorithmic}[1]
\For{round $t = 1$ to $T$}
    \State Receive privatized updates $\{\tilde g_i^{(t)}\}_{i \in \mathcal{C}_t}$ and privacy budgets $\{\varepsilon_i\}_{i \in \mathcal{C}_t}$
    \State Form initial groups $\{\mathcal{T}_\ell^{(t)}\}$ by grouping clients with identical $\varepsilon$
    \State Initialize shuffling buckets from these groups
    \While{there exists a bucket $\mathcal{B}_k$ with $|\mathcal{B}_k| < n_{\min}$}
        \State Let $\mathcal{N}(\mathcal{B}_k)$ denote the adjacent buckets of $\mathcal{B}_k$
        \State Select merge candidate
        \(
        \mathcal{B}^* =
        \arg\min_{\mathcal{B}_j \in \mathcal{N}(\mathcal{B}_k)}
        \big(
        |\varepsilon_j - \varepsilon_k|,
        |\mathcal{B}_j|,
        \varepsilon_j
        \big)
        \)
        \State Merge $\mathcal{B}_k$ with $\mathcal{B}^*$
    \EndWhile
    \For{each final shuffling bucket $\mathcal{B}_k$}
        \For{each parameter index $j = 1,\dots,d$}
            \State Randomly permute $\{\tilde g_i^{(t)}[j] : i \in \mathcal{B}_k\}$
        \EndFor
    \EndFor
    \State Forward the parameter-shuffled updates together with the corresponding bucket-level privacy labels
\EndFor
\end{algorithmic}
\end{algorithm}

\vspace{+0.3em}

\noindent\textbf{Aggregation compatibility.}
After parameter-level shuffling within a bucket $\mathcal{B}_k$, the resulting updates are no longer coherent client-specific gradients. Accordingly, IntraShuffler associates shuffled outputs with a bucket-level privacy label for aggregation. Since all updates within a bucket share the same aggregation weight $\alpha_k$, parameter-level permutation preserves the bucket’s weighted sum. This allows $\varepsilon$-aware aggregation to proceed without modifying the server-side optimization rule, $w^{(t+1)} = w^{(t)} + \sum_{k=1}^{K} \alpha_k \sum_{i \in \mathcal{B}_k} \hat{g}_i^{(t)}$.

\begin{proposition}[Permutation Invariance of Parameter Multisets]
\label{prop:agg}
For any final shuffling bucket $\mathcal{B}_k$ and parameter index $j$, parameter-level shuffling preserves the multiset of parameter values 
\(
\{\tilde g_i^{(t)}[j] : i \in \mathcal{B}_k\}.
\)
\end{proposition}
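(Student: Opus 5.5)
The plan is to reduce the claim to the elementary fact that composing a finite indexed family with a bijection of its index set leaves the associated multiset unchanged. We fix a round $t$, a final bucket $\mathcal{B}_k$, and a coordinate $j$. By the Definition of Parameter-Level Shuffling, the shuffled values are $\hat g_i^{(t)}[j] = \tilde g^{(t)}_{\pi(i)}[j]$ for $i\in\mathcal{B}_k$, where $\pi := \pi_{k,j}^{(t)}$ is a permutation of $\mathcal{B}_k$. The goal is to show
\[
\{\!\{\hat g_i^{(t)}[j] : i\in\mathcal{B}_k\}\!\} = \{\!\{\tilde g_i^{(t)}[j] : i\in\mathcal{B}_k\}\!\},
\]
with multiplicities counted, for every realization of $\pi$. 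This makes the statement deterministic, so no probabilistic argument is required.

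The key step is to compare multiplicity functions. For any real value $v$, the multiplicity of $v$ in the shuffled multiset is
\[
\bigl|\{i\in\mathcal{B}_k : \tilde g^{(t)}_{\pi(i)}[j]=v\}\bigr| .
\]
Since $\pi$ is a bijection of $\mathcal{B}_k$ onto itself, the substitution $i' = \pi(i)$ maps this set bijectively onto
\[
\{i'\in\mathcal{B}_k : \tilde g^{(t)}_{i'}[j]=v\}.
\]
Hence the two cardinalities agree. Equality of multiplicities for every $v$ is exactly equality of multisets.

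Two bookkeeping points need to be made explicit. First, the bucket-merging loop of Algorithm~\ref{alg:intrashuffler} only fixes the index set $\mathcal{B}_k$ before shuffling. The permutation therefore acts within a fixed finite set and never moves values across buckets. Second, because the permutations for different $j$ are independent and each acts only on coordinate $j$, the argument applies coordinatewise without interaction.

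As a corollary, any symmetric function of the multiset is preserved, in particular the sum $\sum_{i\in\mathcal{B}_k}\hat g_i^{(t)}[j]=\sum_{i\in\mathcal{B}_k}\tilde g_i^{(t)}[j]$. This yields the aggregation compatibility claim $\sum_k \alpha_k\sum_{i\in\mathcal{B}_k}\hat g_i^{(t)}$. The only potential obstacle is notational: the proof must keep multisets rather than sets, so that repeated values such as clipped coordinates are counted correctly. Phrasing the argument through multiplicities handles this.
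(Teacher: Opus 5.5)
Your proof is correct and takes the same route as the paper. The paper's justification is one line: shuffling applies a permutation to the values $\{\tilde g_i^{(t)}[j] : i \in \mathcal{B}_k\}$ and so cannot change their multiset. Your multiplicity-counting argument via the bijection $i' = \pi_{k,j}^{(t)}(i)$ is the explicit version of that observation.
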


\noindent This follows since shuffling applies a permutation over the parameter values, which does not alter the underlying multiset.

\begin{proposition}[Recombination Space]
\label{prop:recombination}
Consider a privacy bucket $\mathcal{B}_k$ containing $n_k$ clients, where each locally private update has $d$ parameters. Under parameter-level shuffling, reconstructing a coherent client update requires selecting one of the $n_k$ candidate values for each parameter index, yielding $n_k^d$ possible reconstructions. Recovering the full parameter-to-client assignment for the bucket results in $(n_k!)^d$ possible assignments.
\end{proposition}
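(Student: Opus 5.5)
The plan is to prove both counts by direct counting over the product structure that the Parameter-Level Shuffling definition induces. The starting point is that the permutations $\pi_{k,j}^{(t)}$ are sampled independently for each parameter index $j \in \{1,\dots,d\}$. Hence the joint shuffling randomness for bucket $\mathcal{B}_k$ is a tuple $(\pi_{k,1}^{(t)},\dots,\pi_{k,d}^{(t)})$ ranging over $S_{n_k}^d$, where $S_{n_k}$ is the symmetric group on the $n_k$ client indices of $\mathcal{B}_k$. By Proposition~\ref{prop:agg}, the server observes for each $j$ exactly the multiset $\{\tilde g_i^{(t)}[j] : i \in \mathcal{B}_k\}$, so for each coordinate it has $n_k$ candidate values. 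We treat these values as distinguishable: with continuous Gaussian noise from Eq.~(\ref{eq:hpd-noise}), ties occur with probability zero.

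For the first claim, a coherent reconstruction of a single client's update is a vector $r \in \mathbb{R}^d$ with $r[j]$ drawn from the $j$-th column of candidates. The steps are as follows.
\begin{enumerate}
\item Identify the set of reconstructions with the set of functions $c:\{1,\dots,d\}\to\{1,\dots,n_k\}$, where $c(j)$ indexes which shuffled output slot supplies coordinate $j$.
\item Note that, because the permutations are independent across coordinates, the observation imposes no coupling between $c(j)$ and $c(j')$ for $j\neq j'$. Every such function is therefore consistent with some tuple in $S_{n_k}^d$.
\item Apply the product rule to conclude that there are $n_k^d$ reconstructions.
\end{enumerate}

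For the second claim, recovering the full parameter-to-client assignment means determining, for every $j$, the bijection between the $n_k$ observed values in column $j$ and the $n_k$ clients. For each fixed $j$ there are $n_k!$ such bijections. By the same independence, the choices across coordinates factor, which gives $(n_k!)^d$ assignments. I will also check consistency with the first count: fixing one client's row of the full assignment yields $n_k^d$ options, and the remaining rows contribute $((n_k-1)!)^d$.

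The main obstacle is the claim that every combination is equally consistent with the server's view. This requires that no cross-coordinate structure constrains which values belong together. Within the model as stated, it holds because the shuffler randomness is hidden (Section~\ref{sec:threat}) and the permutations are independent and uniform. Consequently, the posterior over $S_{n_k}^d$ given only the shuffled outputs is uniform. I will state explicitly that the counts describe the combinatorial search space and are not an information-theoretic lower bound: correlations in the true gradients could let an attacker prune candidates, but such pruning does not change the size of the space being counted.
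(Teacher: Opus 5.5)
Your proposal is correct and uses the same direct counting argument as the paper, whose justification is contained in the statement itself: $n_k$ choices per coordinate over $d$ independently shuffled coordinates give $n_k^d$ reconstructions, and $n_k!$ bijections per coordinate give $(n_k!)^d$ assignments. One caution: drop the claim that the posterior over $S_{n_k}^d$ is uniform and keep only the existence-of-a-consistent-permutation argument from your step 2, because the counts do not need it and it is false in general (it would require the attacker's prior on the unshuffled updates to be invariant under per-coordinate permutations, which fails for structured non-IID gradients or merged buckets with different noise scales).
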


\vspace{-0.5em}
\noindent Proposition~\ref{prop:recombination} shows that the reconstruction space grows exponentially with both bucket size and model dimension, fundamentally limiting the feasibility of recombination attacks. This theoretical result is corroborated empirically in Section~\ref{sec:recomb}.

\begin{theorem}[Preservation of Local DP]
\label{thm:ldp_post}
Suppose each client $i$ applies an $(\varepsilon_i,\delta_i)$ locally differentially private mechanism to produce $\tilde g_i^{(t)}$. Then the full IntraShuffler pipeline, including group formation, adaptive bucket merging, parameter-level shuffling, and subsequent server-side processing, preserves the same $(\varepsilon_i,\delta_i)$ privacy guarantees for each client.
\end{theorem}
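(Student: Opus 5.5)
The argument rests on the post-processing property of $(\varepsilon,\delta)$-differential privacy: for any $(\varepsilon,\delta)$-DP mechanism $M$ and any (possibly randomized) map $F$ whose internal randomness is independent of the private data, the composition $F\circ M$ is also $(\varepsilon,\delta)$-DP. The plan is to express the whole IntraShuffler pipeline, as seen from a fixed client $i$, as such a map $F$ applied to that client's privatized output $\tilde g_i^{(t)}$.

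\textbf{Step 1 (fix client $i$ and condition on the rest).} Fix a client $i$ and a round $t$, and take two neighboring local datasets $D_i$ and $D_i'$. Condition on everything that does not depend on $D_i$: the other clients' privatized updates $\{\tilde g_j^{(t)}\}_{j\neq i}$, the participation set $\mathcal{C}_t$, and the declared budgets $\{\varepsilon_j\}$. Because each client runs its local mechanism with its own independent randomness, these quantities have the same distribution under $D_i$ and $D_i'$.

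\textbf{Step 2 (the pipeline is a data-independent function).} Show that each pipeline component is a function of $\tilde g_i^{(t)}$, the conditioned quantities, and fresh randomness independent of $D_i$.
\begin{itemize}
\item Initial grouping and the adaptive merging loop of Algorithm~\ref{alg:intrashuffler} depend only on the declared budgets and bucket sizes. They are therefore deterministic given the conditioned quantities.
\item The permutations $\pi_{k,j}^{(t)}$ are sampled from shuffler randomness independent of all data.
\item Server-side aggregation, and any later analysis (including the attack of Section~\ref{sec:attack}), is a further function of the shuffled outputs and the bucket labels.
\end{itemize}
Hence the joint view is $F(\tilde g_i^{(t)}; R)$, where $R$ is independent of $D_i$.

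\textbf{Step 3 (apply post-processing).} For any measurable output set $S$,
\[
\Pr[F(M_i(D_i))\in S]=\mathbb{E}_R\big[\Pr[M_i(D_i)\in F_R^{-1}(S)]\big]\le e^{\varepsilon_i}\Pr[F(M_i(D_i'))\in S]+\delta_i .
\]
The inequality comes from applying the guarantee of $M_i$ to each preimage $F_R^{-1}(S)$ and then averaging over $R$. Removing the conditioning (again by averaging) yields the unconditional $(\varepsilon_i,\delta_i)$ bound.

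\textbf{Main obstacle: data-dependent mixing.} After shuffling, coordinates of $\tilde g_i^{(t)}$ are interleaved with other clients' values, so one might worry that the bucket structure itself leaks something. I will argue that bucket formation depends only on the public budgets $\varepsilon_j$ and on participation, never on gradient values. If participation were data-dependent, it could be handled by assuming $\mathcal{C}_t$ is chosen independently of the data. The bucket-level privacy label is coarser information derived from the already-public $\varepsilon_i$, so it adds nothing beyond what the server knows at baseline.

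\textbf{Scope of the claim.} This argument shows only that no privacy is lost; it does not claim any amplification. It also requires the shuffler's randomness to be hidden from, or at least independent of, the data, which matches the trust assumptions of Section~\ref{sec:threat}. If the shuffler colludes with the server, the same post-processing argument still gives exactly the baseline $(\varepsilon_i,\delta_i)$ guarantee.
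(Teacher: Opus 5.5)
Your proposal is correct and takes the same route as the paper. The paper proves the theorem in one line: every IntraShuffler operation acts only on the clients' locally privatized outputs, so the post-processing property of DP applies. Your conditioning on the other clients' outputs, your observation that bucket formation and the permutations $\pi_{k,j}^{(t)}$ are independent of the data, and your explicit averaging over $R$ make rigorous what the paper leaves implicit. One caveat applies to your Step 1 claim that the other clients' round-$t$ updates have the same distribution under $D_i$ and $D_i'$: this holds only conditionally on the training history $w^{(t)}$, which the paper's per-round treatment of $\varepsilon$ also tacitly assumes.
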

\vspace{-0.3em}

\noindent \textit{Proof.}
All operations performed by IntraShuffler depend only on the outputs of the client-side local DP mechanisms. By the post-processing property of DP, deterministic or randomized transformations applied after the local DP mechanism cannot weaken the original guarantees.

\begin{figure}[t]
\centering
\begin{subfigure}{0.30\columnwidth}
    \centering
    \includegraphics[width=\linewidth]{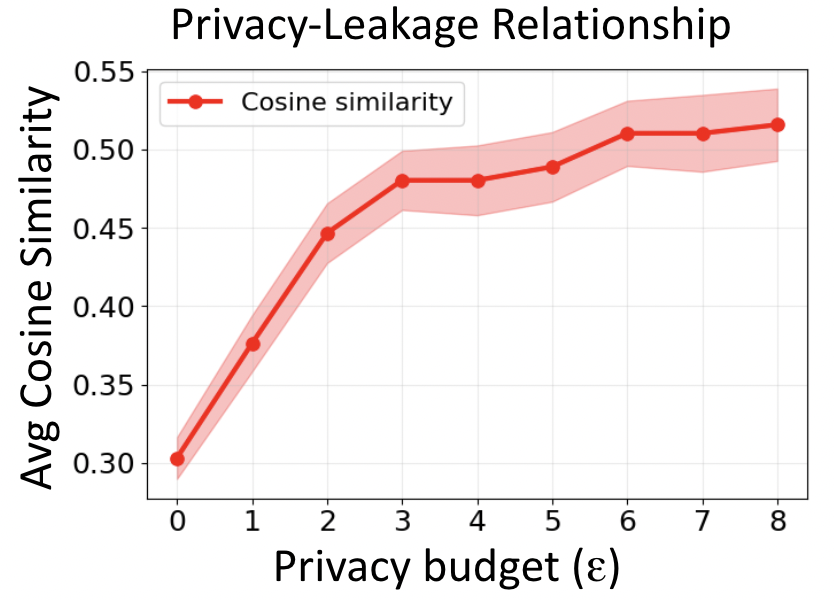}
    \caption{London Household}
    \label{fig:clean}
\end{subfigure}
\hspace{-1.5mm}
\begin{subfigure}{0.299\columnwidth}
    \centering
    \includegraphics[width=\linewidth]{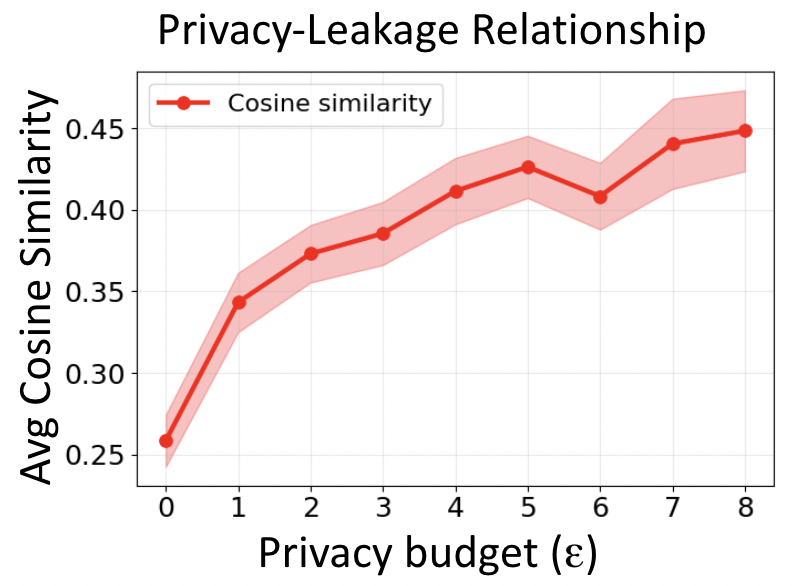}
    \caption{Pecan Street}
    \label{fig:ldp}
\end{subfigure}
\hspace{-1.5mm}
\begin{subfigure}{0.28\columnwidth}
    \centering
    \includegraphics[width=\linewidth]{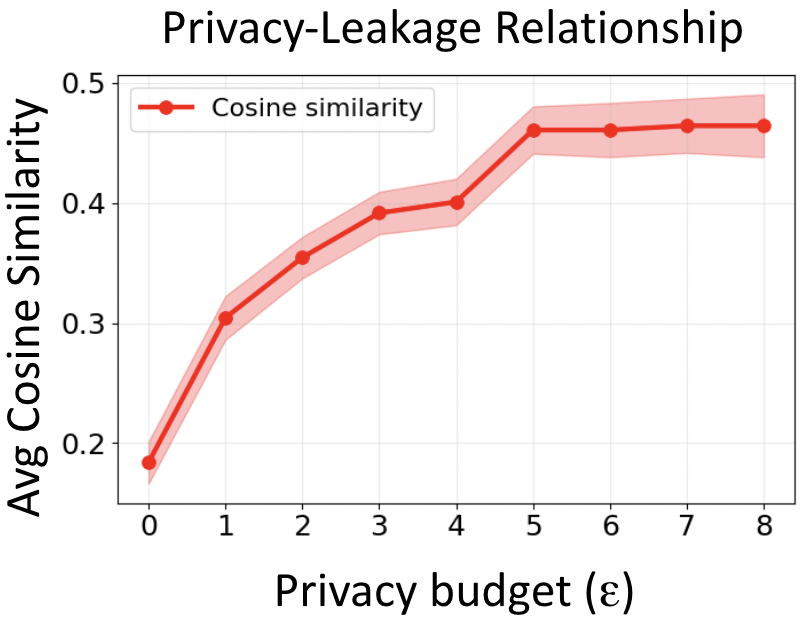}
    \caption{ComStock}
    \label{fig:denoised}
\end{subfigure}
\vspace{-2mm}
\caption{Comparison of denoiser performance in terms of average cosine similarity.}
\vspace{+0.3em}
\label{fig:cosine_similarity_comp}
% \vspace{-3mm}
\end{figure}

Fig.~\ref{fig:Overview} summarizes the IntraShuffler workflow. 
\step{1} Clients train local models on private data and \step{2} release locally differentially private updates using their privacy budgets $\varepsilon_i$. \step{3} Updates are sent to IntraShuffler, which groups them into privacy-compatible buckets. \step{4} Within each bucket, parameter-level shuffling permutes gradient components across clients, disrupting client-specific structure while preserving aggregate statistics. \step{5} The shuffled updates are forwarded to the server for $\varepsilon$-aware aggregation and global model updates. Inference is evaluated under a server equipped with denoisers and surrogate models.
% Fig.~\ref{fig:Overview} summarizes the IntraShuffler workflow.
% \step{1} Each client trains a local copy of the global model on private data and \step{2} releases a locally differentially private update using its chosen privacy budget $\varepsilon_i$. \step{3} The privatized updates are sent to IntraShuffler, which partitions them into buckets according to their declared privacy budgets (and corresponding noise scales). \step{4} Within each bucket, IntraShuffler performs parameter-level shuffling by independently permuting gradient components across clients, disrupting client-specific structure while preserving aggregate statistics. \step{5} The shuffled updates are then sent to the server for $\varepsilon$-aware aggregation and global model update, limiting gradient-structure leakage even under an honest-but-curious server. Inference is evaluated assuming a server equipped with denoisers and surrogate models

\begin{figure}[t]
\centering
\includegraphics[height=0.20\textheight]
{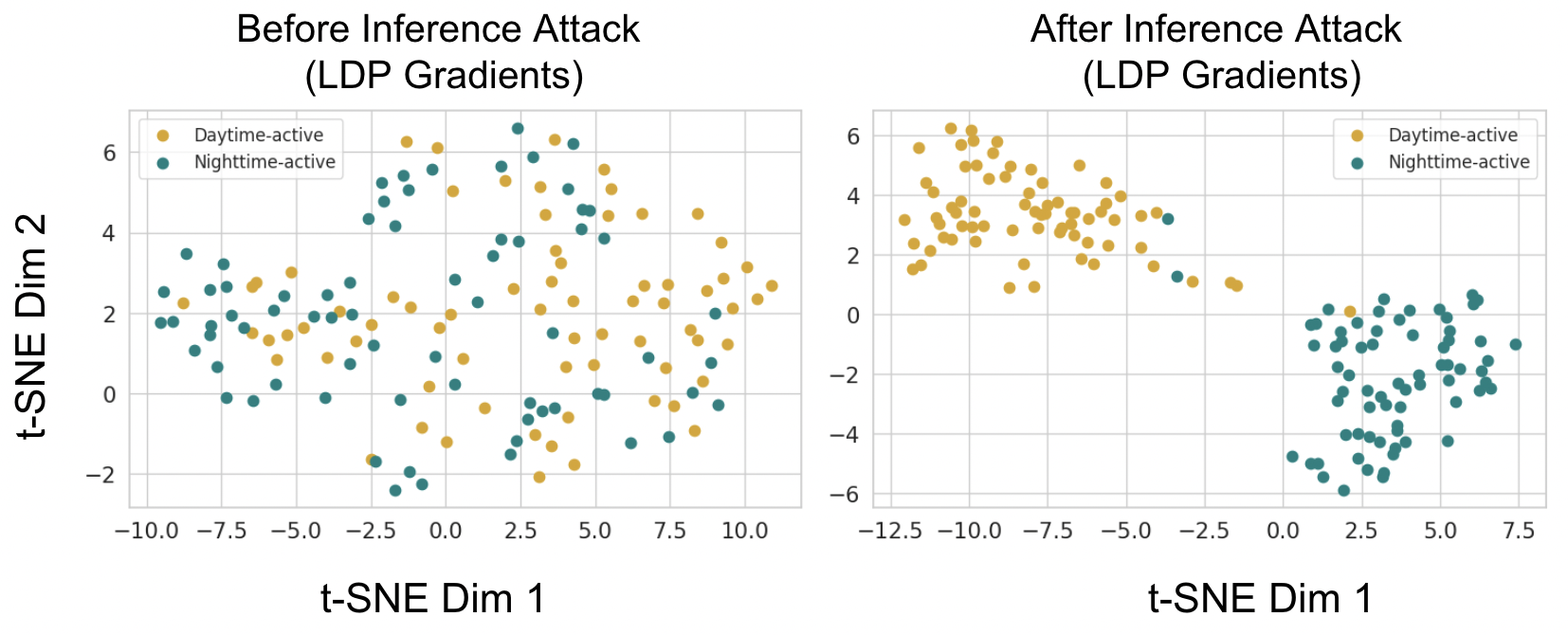}
\caption{Privacy leakage via $\varepsilon$-aware denoising. Server-side denoising of LDP gradients enables a surrogate model to partially infer client behavioral groups.}
\label{fig:seperation_withoutintraShuffler}
\vspace{+0.3em}
\end{figure}
\section{Evaluation}
\vspace{-0.6em}
\subsection{Experimental Setup}
\vspace{-0.6em}
All experiments were conducted with TPU-v2 acceleration. Each session provided approximately 51\,GB of system memory and 226\,GB of local disk storage.

\noindent\textbf{Datasets.}  
We evaluate IntraShuffler on real-world household-level load forecasting datasets, including the London Household Electricity and Pecan Street Electricity datasets, where each client corresponds to a household. These datasets capture behavioral patterns that may induce privacy leakage through distributional inference. We also include the large-scale ComStock dataset and evaluate cross-domain generality on the CIFAR-10 vision dataset.

\noindent\textbf{FL Setting.} 
We consider a standard FL setting with 200 clients, where 20 are randomly selected per round over 50 rounds. Each client performs $E=2$ local epochs and applies client-specific LDP using clipping norm $C$, $\delta=10^{-5}$, and Gaussian noise calibrated to $\varepsilon \in \{0.5,1,2,4,8\}$ under DP-SGD, with $\varepsilon$ treated as the per-round privacy parameter. Updates pass through IntraShuffler before aggregation, and results are averaged over multiple seeds. We use MobileNet and DenseNet for image classification, and LSTM and TimesFM for time-series forecasting, where the lightweight LSTM isolates the effect of aggregation and shuffling. Additional details are in Appendix~\ref{sec:appendix_utility}.

\noindent\textbf{Adversarial Evaluation.}
Under the threat model in Section~\ref{sec:threat}, we evaluate privacy leakage using two inference tools, a learned denoiser and a surrogate model trained offline on a disjoint historical dataset. Synthetic gradients with varying DP noise levels train an $\varepsilon$-conditioned denoiser, while the surrogate model learns domain-level gradient representations without client identifiers. During evaluation, both models operate on shuffled updates observable by the server to perform client linkage and distributional inference.
% \noindent\textbf{Adversarial Evaluation.}
% Under the threat model in Section~\ref{sec:threat}, we evaluate privacy leakage using a learned denoiser and a surrogate model trained offline on a disjoint historical dataset. Synthetic gradients with varying DP noise levels are used to train an $\varepsilon$-conditioned denoiser, while the surrogate learns domain-level gradient representations without client identifiers. During evaluation, both models operate on shuffled updates observed by the server to perform client linkage and distributional inference.

\noindent\textbf{Baseline Comparison.} We compare against Shuffle-DP and plain FL-DP as they represent the two dominant classes of defenses,anonymity-based and noise-based, while noting that, to our knowledge, no prior work studies parameter-level shuffling in HDP-FL under a shuffle-model architecture.
% Additional details are provided in Appendix~\ref{app:attackerdetails}.

\begin{wraptable}{r}{0.56\columnwidth}
\centering
\footnotesize
\setlength{\tabcolsep}{1pt}
\begin{tabular}{l|c|c|c}
\toprule
Method & CosSim & LinkAcc & Lin.Sep \\
\midrule
No Denoising & $0.12 \pm 0.02$ & $0.18 \pm 0.03$ & $0.20 \pm 0.02$ \\
$\varepsilon$-Denoising & $0.45 \pm 0.04$ & $0.76 \pm 0.03$ & $0.68 \pm 0.03$ \\
\bottomrule
\end{tabular}
\caption{Impact of denoiser on privacy leakage}
\label{tab:denoise_effect}
\end{wraptable}
\vspace{-0.5em}
\subsection{Gradient Leakage and Inference Risk}
\vspace{-0.5em}

We evaluate privacy leakage arising from server-side analysis of locally private updates and assess whether IntraShuffler mitigates the inference pipeline described in Section~\ref{sec:attack}. Specifically, we examine
gradient structure recovery via $\varepsilon$-aware denoising and downstream behavioral inference using surrogate models.
% \begin{wraptable}{r}{0.36\columnwidth}
% \centering
% \footnotesize
% \setlength{\tabcolsep}{0.5pt}
% \begin{tabular}{lcc}
% \toprule
% Method & CosSim |& LinkAcc \\
% \midrule
% No Denoising & $\approx0.12$ & $\approx0.18$ \\
% $\varepsilon$-Denoising & $\approx0.45$ & $\approx0.76$ \\
% \bottomrule
% \end{tabular}
% \caption{Impact of denoising}
% \label{tab:denoise_effect}
% \end{wraptable}
\begin{figure}[t]
    \centering
    \includegraphics[height=0.21\textheight]{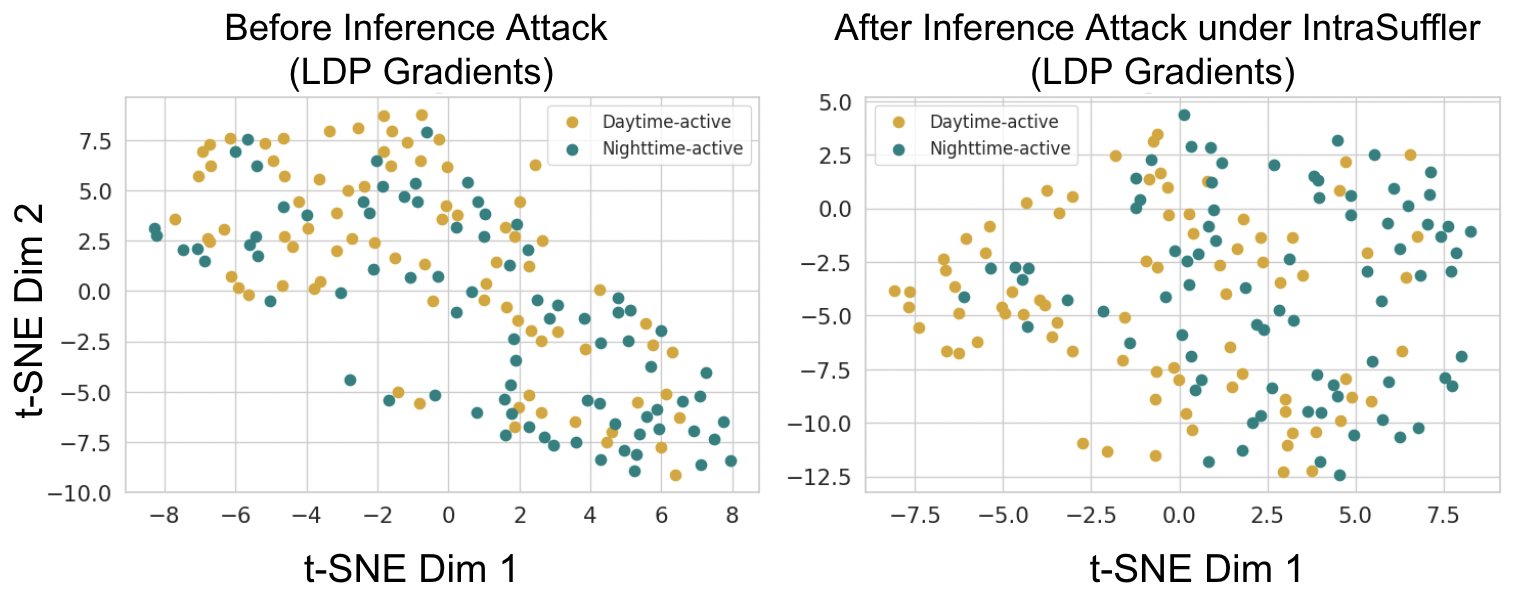}
    \caption{Effect of IntraShuffler on data distribution inference. Client representations remain blended, preventing the separation of similar groups.}
    \label{fig:seperation_withintraShuffler}
\end{figure}
\vspace{+0.5em}

\noindent\textbf{Leakage via $\varepsilon$-Aware Denoising.}
Without IntraShuffler, the server observes noisy gradients along with their privacy budgets and applies an $\varepsilon$-aware denoiser using the known noise scale. Although the recovered gradients remain imperfect, we observe consistent alignment between the denoised and true gradients, with cosine similarity typically between $0.35$ and $0.45$ across clients (Fig.~\ref{fig:cosine_similarity_comp}), revealing coarse structural information sufficient for inference.
\begin{wraptable}{r}{0.69\columnwidth}
\centering
\footnotesize
\setlength{\tabcolsep}{1pt}
\begin{tabular}{l|c|c|c|c}
\toprule
Method & CosAUC & $\Delta$Cos & Sur.Acc. & ARI \\
\midrule
No Shuff+DP & $0.42\pm0.02$ & $0.21\pm0.01$ & $0.78\pm0.03$ & $0.41\pm0.02$ \\
Shuffler-DP  & $0.36\pm0.02$ & $0.18\pm0.01$ & $0.69\pm0.03$ & $0.30\pm0.02$ \\
IntraShuffler   & $0.15\pm0.01$ & $0.05\pm0.01$ & $0.33\pm0.02$ & $0.06\pm0.01$ \\
\bottomrule
\end{tabular}
\caption{Comparison of privacy leakage under defenses.}
\label{tab:leakage_summary}
\end{wraptable}
Table~\ref{tab:denoise_effect} shows the role of denoising for reliable linkage, where denoising increases cosine similarity from $0.12$ to $0.45$, improves linkage accuracy, and raises linear separability (logistic regression) from near-random $0.20$ to $0.68$. Table~\ref{tab:leakage_summary} summarizes leakage across multiple metrics, cosine recovery (Cos AUC, the area under the cosine-similarity recovery curve across privacy budgets), sensitivity to privacy budgets ($\Delta$Cos), surrogate inference accuracy (Sur.\ Acc.), and clustering separability (ARI), where higher values indicate stronger leakage. Without IntraShuffler, gradients remain partially recoverable and behaviorally separable, whereas IntraShuffler substantially reduces all leakage metrics.
\begin{wrapfigure}{r}{0.42\columnwidth}
    \centering
    \vspace{-1pt}
    \includegraphics[width=\linewidth, height=3.7cm]{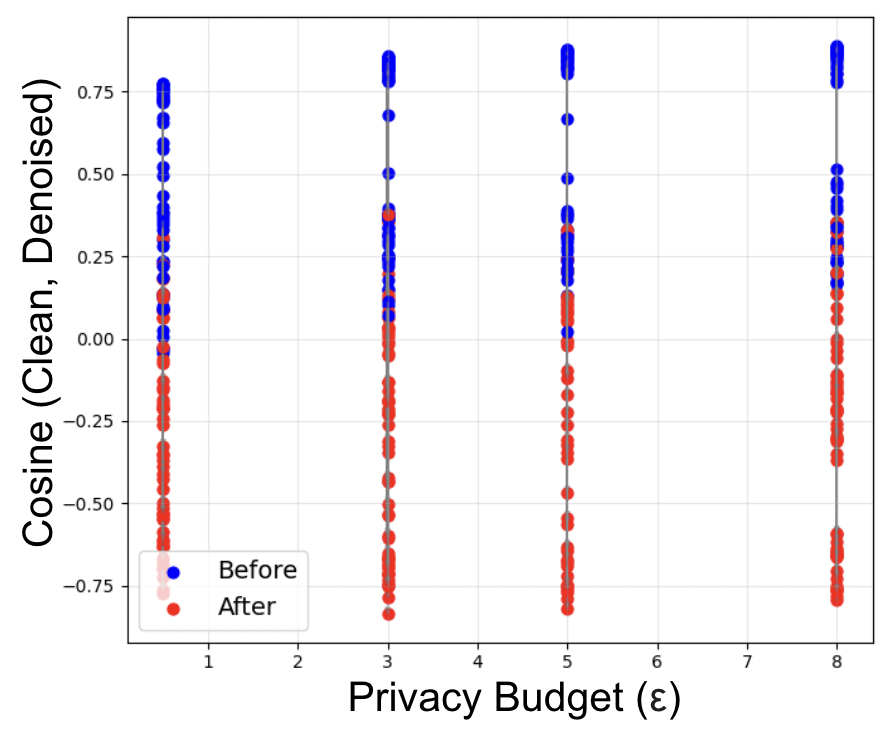}
    \caption{Cosine similarity comparison of denoised gradients before and after IntraShuffler}
    \label{fig:wrap_intrashuffler}
    \vspace{-1pt}
\end{wrapfigure}

\noindent\textbf{Behavioral Inference.} We next evaluate if this recovered structure enables inference about client data distributions. The server trains a surrogate model on a disjoint historical dataset and uses the denoised gradients as input features. In the load forecasting datasets, this allows the server to distinguish behavioral groups such as daytime-active versus nighttime-active households. These behavioral labels are derived from aggregate consumption statistics in the historical dataset and are not taken from federated clients. As shown in Fig.~\ref{fig:seperation_withoutintraShuffler}, raw LDP gradients exhibit substantial overlap and do not directly reveal client behavior; however, $\varepsilon$-aware denoising partially restores latent gradient structure, enabling the surrogate model to separate the two behavioral groups.

\noindent\textbf{Mitigation via IntraShuffler.} Applying IntraShuffler disrupts the structural consistency exploited by the attack. By shuffling gradient parameters within privacy-compatible buckets, the mechanism breaks the correspondence between gradient components and individual clients while preserving aggregate statistics for server aggregation. As shown in Fig.~\ref{fig:gradientLeak2}, $\varepsilon$-aware denoising fails to recover meaningful gradient structure when IntraShuffler is applied, resulting in substantially lower cosine similarity between denoised and true gradients (Fig.~\ref{fig:wrap_intrashuffler}). Consequently, the surrogate model is no longer able to reliably separate behavioral groups
(Fig.~\ref{fig:seperation_withintraShuffler}), indicating that both gradient reconstruction and downstream behavioral inference are significantly reduced. Across datasets, IntraShuffler reduces cosine similarity by more than 60\% compared to the non-shuffled setting, indicating substantial suppression of recoverable gradient structure.

\vspace{+0.3em}
\begin{figure}[htbp]
    \centering
    \begin{subfigure}{0.75\columnwidth}
        \centering
        \includegraphics[width=\columnwidth,height=3cm]{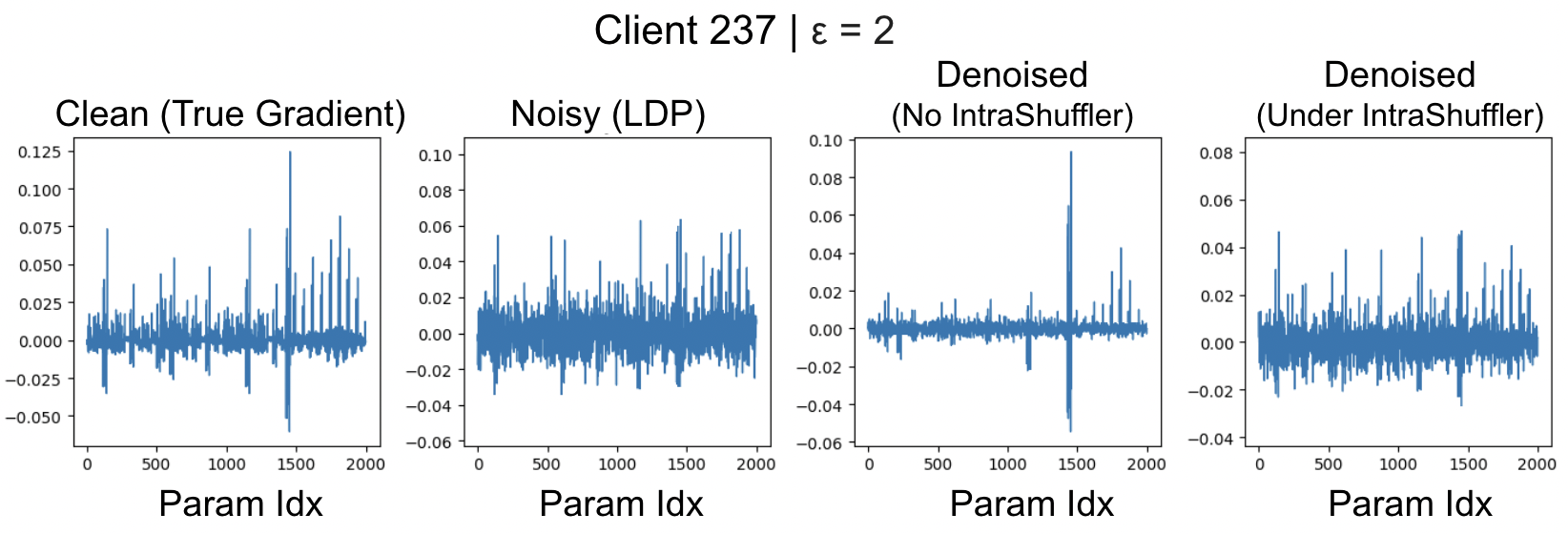}
        % \caption{First}
        \label{fig:a}
    \end{subfigure}
    \hfill
    \begin{subfigure}{0.75\columnwidth}
        \centering
        \includegraphics[width=\columnwidth,height=3cm]{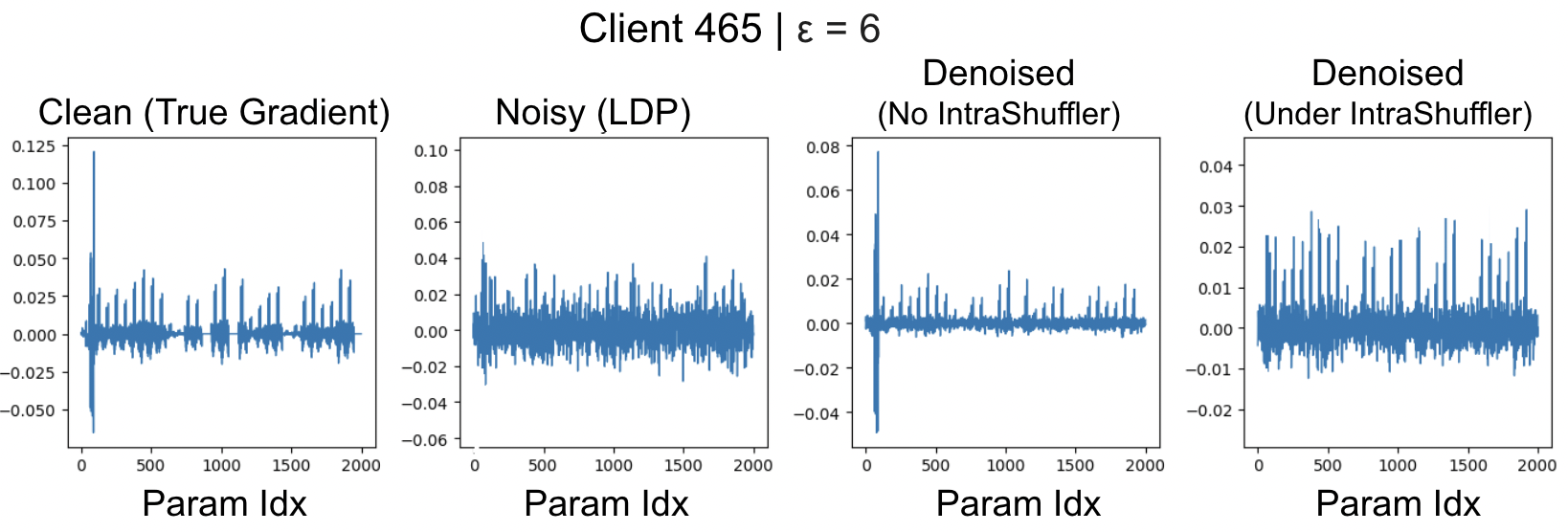}
        % \caption{Second}
        \label{fig:b}
    \end{subfigure}
    \caption{Effect of IntraShuffler on gradient recovery. Without shuffling, denoising partially restores gradients; IntraShuffler disrupts alignment and prevents recovery even at higher $\epsilon$.}
    \label{fig:gradientLeak2}
\end{figure}
\vspace{+0.7em}

% We evaluate cross-round linkability by testing whether a server-side adversary can match updates from the same client across rounds. Following Section~\ref{sec:attack}, the server computes cosine similarity between denoised gradients and performs nearest-neighbor matching. We report source inference accuracy under four settings, \emph{No Shuffle, No DP}, \emph{No Shuffle with DP}, \emph{Shuffler-DP}, and \emph{IntraShuffler}, where higher accuracy indicates stronger leakage. Figure~\ref{fig:round_likability} shows that without shuffling or DP, source inference is nearly perfect, confirming strong client-specific signatures. LDP reduces linkability only marginally, as residual structure persists, while message-level shuffling further lowers accuracy but still allows moderate linkage across rounds. In contrast, IntraShuffler consistently achieves near-random source inference accuracy across datasets and rounds. By performing parameter-level shuffling within privacy-compatible buckets, the mechanism breaks the structural consistency of gradients generated by the same client over time. As a result, updates become statistically indistinguishable across clients, preventing reliable cross-round linkage. These results demonstrate that while LDP and conventional shuffling offer only partial mitigation, IntraShuffler effectively eliminates long-term client linkability in  HDP-based FL.
\vspace{-1.0em}
\subsection{Cross-Round Linkability Analysis}
\vspace{-0.5em}
We evaluate cross-round linkability by testing whether a server-side adversary can match updates from the same client across rounds. Following Section~\ref{sec:attack}, the server computes cosine similarity between denoised gradients and applies nearest-neighbor matching. We report source
inference accuracy under four settings: \emph{No Shuffle, No DP}, \emph{No Shuffle with DP}, \emph{Shuffler-DP}, and \emph{IntraShuffler}, where higher accuracy indicates stronger leakage. Figure~\ref{fig:round_likability} shows that source inference is nearly perfect without shuffling or DP, confirming strong client-specific signatures. LDP provides only marginal protection, and message-level shuffling further reduces but does not eliminate linkability. In contrast, IntraShuffler achieves near-random source inference accuracy across datasets and rounds by shuffling parameters within privacy-compatible buckets, breaking the structural consistency of same-client gradients over time. These results show that while LDP and conventional shuffling provide partial mitigation, IntraShuffler effectively eliminates long-term client linkability in HDP-based FL.

\begin{table*}[t]
\centering
\footnotesize
\setlength{\tabcolsep}{7pt}
\renewcommand{\arraystretch}{1.1}
\begin{tabular}{c|c|ccc}
\hline
\textbf{Model} & \textbf{Aggregator} 
& \textbf{Plain FL-DP} & \textbf{Shuffle-DP} & \textbf{IntraShuffler} \\
\hline
\multirow{3}{*}{LSTM}
 & FedAvg  & $0.442 \pm 0.016$ & $0.423 \pm 0.014$ & $0.424 \pm 0.014$ \\
 & FedProx & $0.439 \pm 0.015$ & $0.421 \pm 0.013$ & $0.422 \pm 0.013$ \\
 & FedOpt  & $0.433 \pm 0.014$ & $0.417 \pm 0.012$ & $0.418 \pm 0.012$ \\
\hline
\multirow{3}{*}{TimesFM}
 & FedAvg  & $0.409 \pm 0.015$ & $0.392 \pm 0.013$ & $0.393 \pm 0.013$ \\
 & FedProx & $0.406 \pm 0.014$ & $0.390 \pm 0.012$ & $0.391 \pm 0.012$ \\
 & FedOpt  & $0.401 \pm 0.013$ & $0.386 \pm 0.011$ & $0.387 \pm 0.011$ \\
\hline
\end{tabular}
\vspace{+0.5em}
\caption{Final validation RMSE (mean $\pm$ std) on the ComStock dataset under different aggregation rules, privacy mechanisms, and global models.}
\label{tab:Utility}
\end{table*}

\vspace{-0.5em}
\subsection{Utility and Convergence Analysis}
\vspace{-0.3em}

\noindent We evaluate the impact of IntraShuffler on model utility and convergence under heterogeneous privacy budgets, comparing it with non-shuffled FL and Shuffle-DP. Table~\ref{tab:Utility} reports final RMSE on ComStock across models and aggregation rules. IntraShuffler achieves utility comparable to Shuffle-DP, with differences within run-to-run variability, indicating that parameter-level shuffling preserves convergence and learning performance. This holds across FedAvg, FedOpt, and FedProx, as IntraShuffler operates at the server-side aggregation interface without altering client optimization. In contrast, Plain FL-DP exhibits higher error due to stronger perturbation. Additional results are in Appendix~\ref{sec:appendix_utility}.

\begin{wraptable}{r}{0.42\columnwidth}
\vspace{+0.3em}
\centering
\footnotesize
\setlength{\tabcolsep}{1pt}
\begin{tabular}{lccc}
\toprule
Method & London & Pecan & ComStock \\
\midrule
No Bucket & 0.577 & 0.551 & 0.518 \\
Bucket & \textbf{0.408} & \textbf{0.424} & \textbf{0.393} \\
\bottomrule
\end{tabular}
\vspace{-0.5em}
\caption{Impact of bucketing in IntraShuffler.}
\label{tab:importance_of_bucketing}
% \vspace{-0.5mm}
\end{wraptable}

\noindent\textbf{Role of Bucketing in Utility.} Table~\ref{tab:importance_of_bucketing} evaluates the role of bucketing in
IntraShuffler across three datasets using TimesFM. Without bucketing, global parameter-level shuffling mixes gradients with heterogeneous noise scales, diluting the contribution of low-noise updates under $\varepsilon$-aware aggregation. In our implementation, the server
re-weights updates according to their declared privacy budgets (Eq.~\ref{eq:eps-agg}), assigning larger weights to less noisy high-$\varepsilon$ updates. Privacy-aware bucketing preserves this re-weighting while still enabling parameter-level shuffling, resulting in substantially lower forecasting error.

\vspace{-0.7em}
\subsection{Bucket-Level Privacy Analysis}
\vspace{-0.5em}

HDP introduces privacy imbalance because clients with larger $\varepsilon$ contribute less noisy gradients and therefore expose more recoverable structure. To quantify this effect, we evaluate the attack success rate separately for each privacy bucket. The attacker applies $\varepsilon$-aware denoising followed by surrogate behavioral inference,
and we report the resulting inference accuracy per bucket. 
\begin{wraptable}{r}{0.52\columnwidth}
\centering
\footnotesize
\setlength{\tabcolsep}{1pt}
\begin{tabular}{lc|c|c}
\toprule
Bucket & NoShuff+DP & Shuff-DP & Ours \\
\midrule
$\varepsilon\le1$  & $0.41\pm0.02$ & $0.38\pm0.02$ & $0.32\pm0.01$ \\
$1<\varepsilon\le4$ & $0.63\pm0.04$ & $0.55\pm0.03$ & $0.34\pm0.02$ \\
$\varepsilon>4$ & $0.81\pm0.03$ & $0.72\pm0.03$ & $0.36\pm0.02$ \\
\bottomrule
\end{tabular}
\vspace{-0.5em}
\caption{Bucket-level attack success.}
\label{tab:bucket_attack}
\end{wraptable}
Table~\ref{tab:bucket_attack} shows that attack success increases with larger privacy budgets in the absence of IntraShuffler, confirming that clients in high-$\varepsilon$ buckets are significantly more vulnerable. While message-level shuffling partially reduces leakage, the attack remains effective because gradient structure persists within buckets. IntraShuffler eliminates this imbalance by disrupting parameter-level structure within each bucket, reducing attack accuracy to near-random levels across all privacy groups.

\vspace{-0.5em}
\subsection{Feasibility of Recombination Attacks}
\label{sec:recomb}
\vspace{-0.5em}

We examine whether parameter recombination attacks remain feasible under our threat model. Prior work~\cite{Athanasiou_2024} shows that last-layer recombination can enable source inference when the attacker has prior knowledge of the target client’s data distribution, which provides a reference for validating candidate reconstructions. Without such knowledge, the server has no reliable criterion to identify the correct permutation. In our setting, the server must rely on downstream inference (denoising and surrogate modeling (Sec.~\ref{sec:attack})) to evaluate each candidate, introducing an additional step per recombination. At the same time, reconstructing a coherent update from a bucket of size $n_k$ and model dimension $d$ requires searching over $n_k^d$ possible combinations (Table~\ref{tab:recomb_space}), which grows exponentially and makes exhaustive search infeasible. Runtime also increases rapidly with bucket size, making it even more impractical. These results indicate that, under client-side DP and parameter-level shuffling, recombination-based SIAs are both computationally and statistically ineffective.

\begin{wraptable}{r}{0.42\columnwidth}
\centering
\scriptsize
\setlength{\tabcolsep}{2pt}
\begin{tabular}{c|c|c|c}
\toprule
$n_k$ & $d$ & Search Space & 1 eval/sec \\
\midrule
4  & $10^4$ & $4^{10^4} \approx 10^{6021}$ & $10^{6021}$ sec \\
8  & $10^4$ & $8^{10^4} \approx 10^{9030}$ & $10^{9030}$ sec \\
16 & $10^4$ & $16^{10^4} \approx 10^{12041}$ & $10^{12041}$ sec \\
\bottomrule
\end{tabular}
\vspace{-0.5em}
\caption{Combinatorial growth of recombination space and runtime.}
\label{tab:recomb_space}
\end{wraptable}

% \vspace{+0.5em}
\begin{figure}[t]
    \centering
    \begin{subfigure}{0.24\columnwidth}
        \centering
        \includegraphics[width=\linewidth]{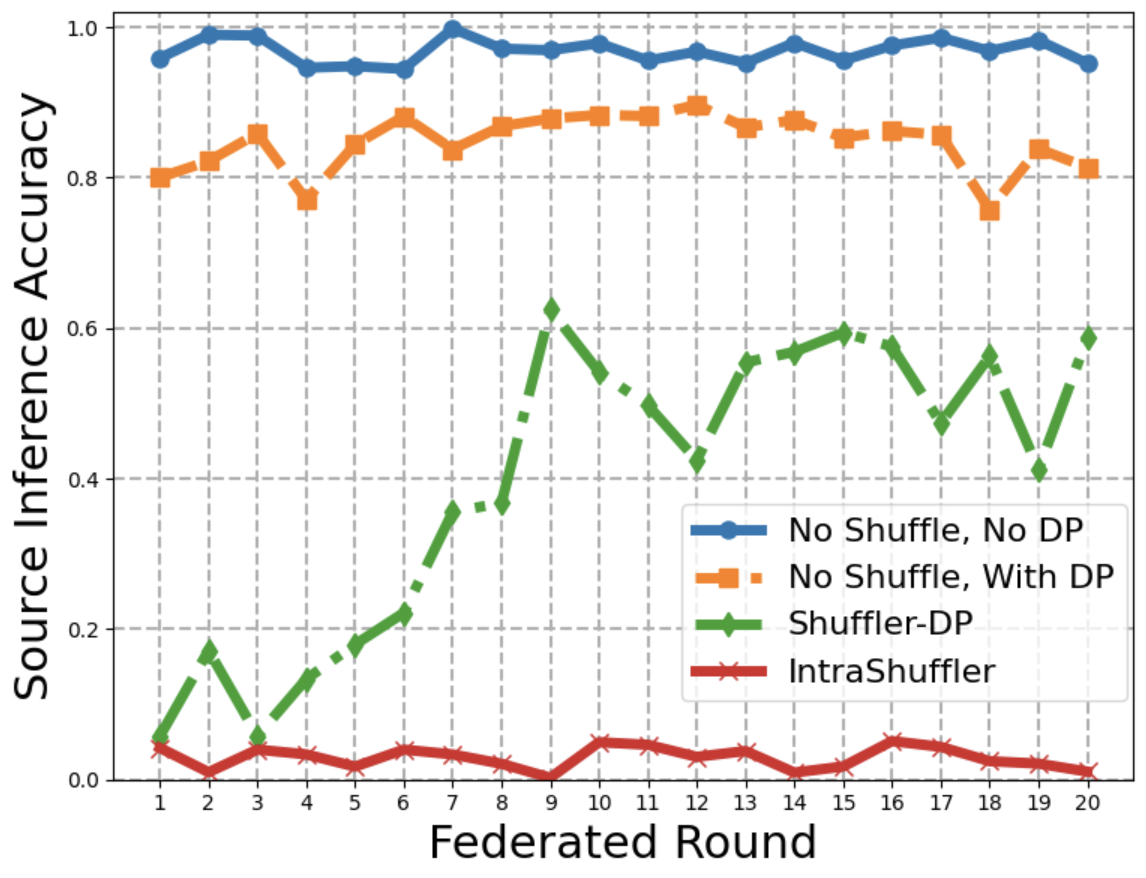}
        \caption{}
        \label{fig:a}
    \end{subfigure}\hfill
    \begin{subfigure}{0.24\columnwidth}
        \centering
        \includegraphics[width=\linewidth]{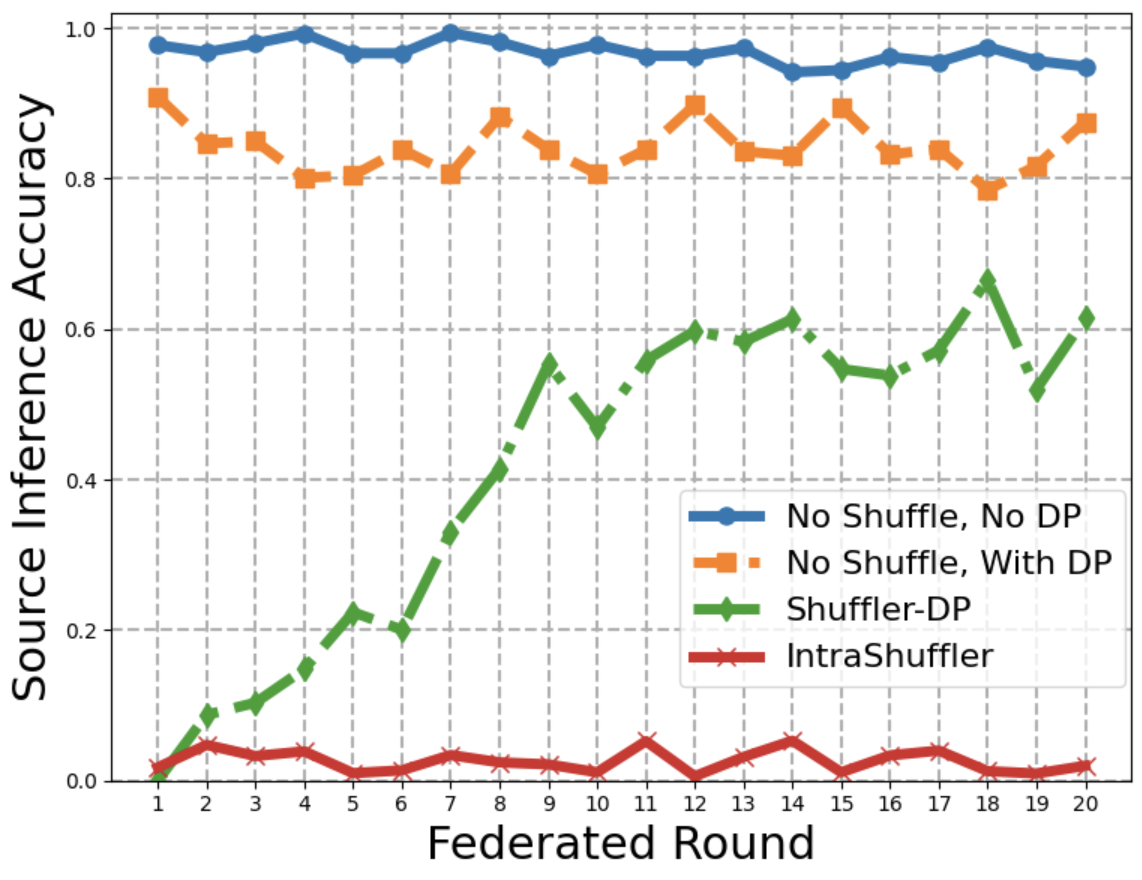}
        \caption{}
        \label{fig:b}
    \end{subfigure}\hfill
    \begin{subfigure}{0.24\columnwidth}
        \centering
        \includegraphics[width=\linewidth]{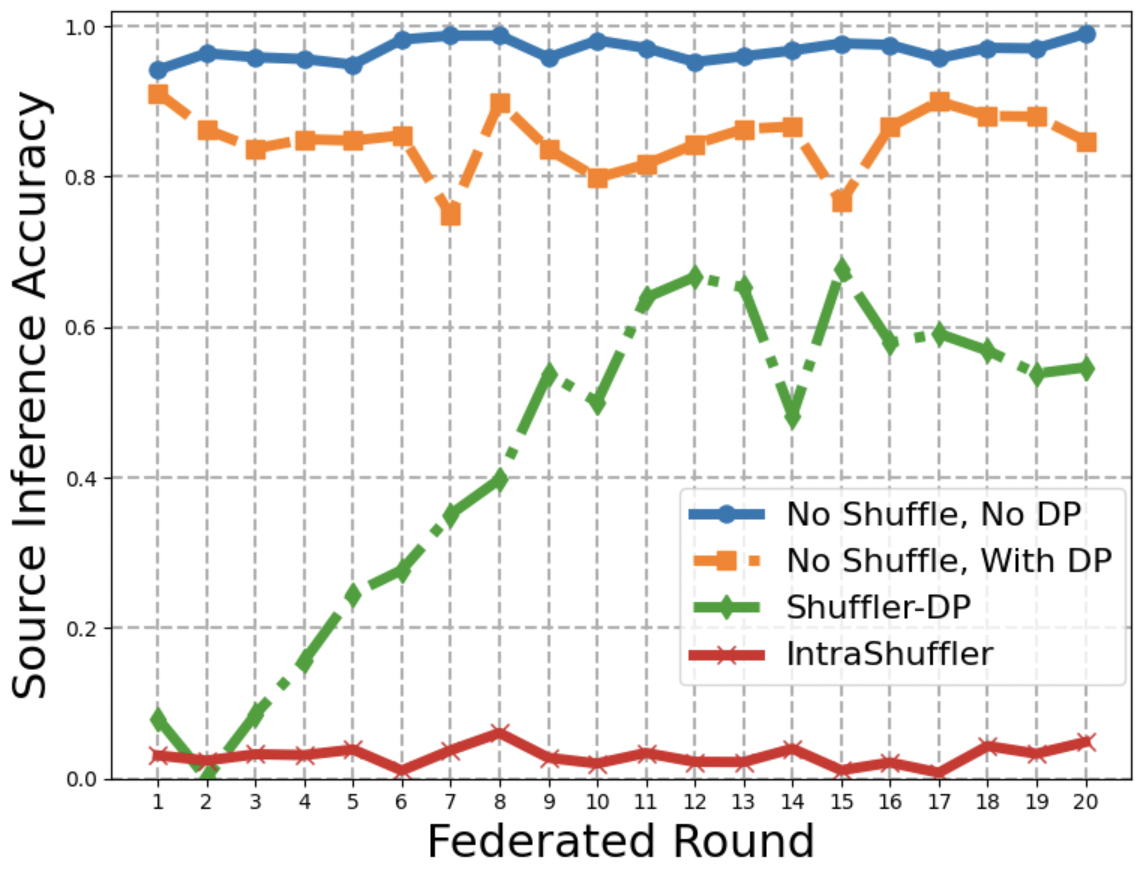}
        \caption{}
        \label{fig:c}
    \end{subfigure}\hfill
    \begin{subfigure}{0.24\columnwidth}
        \centering
        \includegraphics[width=\linewidth]{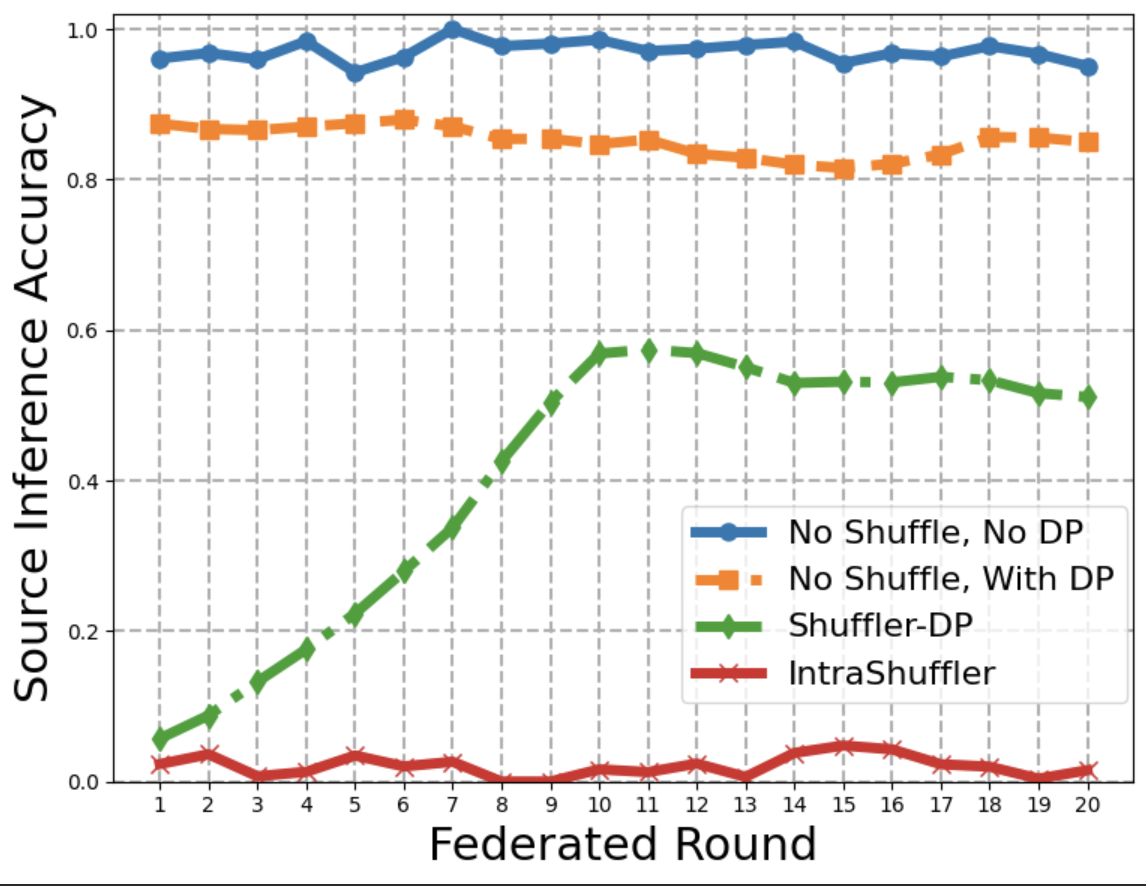}
        \caption{}
        \label{fig:d}
    \end{subfigure}
    \caption{Linkability analysis of a client participating in multiple rounds of FL (a) London Household, (b) Pecan Street, (c) ComStock, and (d) CIFAR-10.}
    \vspace{+0.5em}
    \label{fig:round_likability}
\end{figure}

% asymptotic storage cost.

\vspace{-0.5em}
\section{Communication Efficiency and IntraShuffler}
\vspace{-0.7em}

IntraShuffler introduces additional server-side computation due to privacy-aware bucketing and intra-bucket parameter-level shuffling, but does not change the communication pattern or message size between clients and the server. Its goal is to mitigate privacy leakage under HDP and non-IID data, rather than to reduce communication overhead. Let $m$ denote the number of participating clients, $L$ the number of layers, and $d_\ell$ the number of parameters in layer $\ell$. Bucketing incurs $O(m)$ time. For each bucket $\mathcal{B}_k$, parameter-level shuffling costs \(O\!\left(\sum_{\ell=1}^{L} |\mathcal{B}_k| d_\ell\right)\), yielding a total per-round cost of \(O\!\left(\sum_{\ell=1}^{L} m d_\ell\right)=O(mD)\), where $D=\sum_{\ell=1}^{L} d_\ell$. This is linear in the model size and comparable to standard server-side aggregation. Table~\ref{tab:comm_overhead1} summarizes the asymptotic overhead. The memory overhead is $O(mD)$, as the shuffler stores all privatized updates for a round, matching standard aggregation without additional asymptotic cost. While larger models (e.g., TimesFM) incur higher absolute cost than smaller ones (e.g., LSTM), the asymptotic complexity remains unchanged.

\vspace{+0.5em}
\begin{table}[htbp]
\centering
\footnotesize
\setlength{\tabcolsep}{5pt}
\renewcommand{\arraystretch}{1.15}
\begin{tabular}{l|c|c|c|c}
\hline
\textbf{Component} & \textbf{Depends on} & \textbf{Cost} & \textbf{LSTM} & \textbf{TimesFM} \\
\hline
Privacy bucketing & $m$ & $O(m)$ & negligible & negligible \\
parameter-level shuffling & $m, L, D$ &
$O(mD)$ &
$O(mD_{\text{LSTM}})$ &
$O(mD_{\text{TFM}})$ \\
Aggregation (FedAvg) & $m, D$ &
$O(mD)$ &
$O(mD_{\text{LSTM}})$ &
$O(mD_{\text{TFM}})$ \\
\hline
\end{tabular}
\vspace{+0.3em}
\caption{Server-side overhead of IntraShuffler as a function of the number of clients $m$, model depth $L$, and total parameter count $D=\sum_{\ell=1}^{L} d_\ell$.}
\label{tab:comm_overhead1}
\end{table}

\vspace{-1em}
\section{Limitations and Future Directions}
\vspace{-1.0em}
A fundamental limitation of bucket-based anonymization approaches arises when the number of participating clients is small and privacy budgets are highly diverse, leading to low bucket populations. In the worst case, where $n$ clients have $n$ distinct privacy budgets, each bucket contains a single client, eliminating the effectiveness of parameter-level shuffling entirely. Additionally, IntraShuffler focuses on privacy heterogeneity, while practical FL systems often exhibit multiple forms of heterogeneity, including data distribution, system capabilities, and participation dynamics. A natural direction is to design adaptive bucketing strategies that jointly account for multiple sources of heterogeneity, enabling more robust anonymization while balancing privacy and utility in realistic federated settings. Another direction is to explore compatibility with complementary techniques such as secure aggregation, trusted execution environments, and communication-efficient compression.

\vspace{-1em}
\section{Conclusion}
\vspace{-1.0em}
This work shows that in HDP-FL, privacy leakage arises not only from insufficient noise but from the persistence of gradient structure under $\varepsilon$-aware aggregation and non-IID data. We demonstrate that existing shuffle-based privacy amplification mechanisms, while effective in homogeneous settings, do not address this leakage and may destabilize aggregation when anonymity is maximized without regard to heterogeneity. By addressing this gap, we propose a defense that combines privacy-aware bucketing with parameter-level shuffling to disrupt gradient structure while preserving $\varepsilon$-aware aggregation. Experiments show that IntraShuffler significantly reduces gradient recoverability and inference accuracy while maintaining model utility across multiple FL optimizers.

\section{Acknowledgment}
\vspace{-0.8em}
% \vspace{-0.5em}
This material is based upon work supported by the U.S. Department of Energy, Office of Science, Office of Advanced Scientific Computing Research under Contract No. DE-AC05-00OR22725. This manuscript has been co-authored by UT-Battelle, LLC under Contract No. DE-AC05-00OR22725 with the U.S. Department of Energy. The United States Government retains and the publisher, by accepting the article for publication, acknowledges that the United States Government retains a non-exclusive, paid-up, irrevocable, world-wide license to publish or reproduce the published form of this manuscript, or allow others to do so, for United States Government purposes. The Department of Energy will provide public access to these results of federally sponsored research in accordance with the DOE Public Access Plan (http://energy.gov/downloads/doe-public-access-plan).

\bibliographystyle{IEEEtran}

\bibliography{ref}

\appendix

% \section*{Acknowledgment}
% \vspace{-0.8em}
% % \vspace{-0.5em}
% This material is based upon work supported by the U.S. Department of Energy, Office of Science, Office of Advanced Scientific Computing Research under Contract No. DE-AC05-00OR22725. This manuscript has been co-authored by UT-Battelle, LLC under Contract No. DE-AC05-00OR22725 with the U.S. Department of Energy. The United States Government retains and the publisher, by accepting the article for publication, acknowledges that the United States Government retains a non-exclusive, paid-up, irrevocable, world-wide license to publish or reproduce the published form of this manuscript, or allow others to do so, for United States Government purposes. The Department of Energy will provide public access to these results of federally sponsored research in accordance with the DOE Public Access Plan (http://energy.gov/downloads/doe-public-access-plan).

\vspace{-0.8em}
\section{Utility and Convergence Results}
\label{sec:appendix_utility}
\vspace{-0.8em}
This appendix reports additional utility results on load forecasting (London Household Electricity, Pecan Street) and image classification (CIFAR-10 with MobileNet and DenseNet). These complement the ComStock results and show that IntraShuffler preserves utility across datasets, models, and tasks.
% \vspace{-0.5em}

\noindent\textbf{A.1 Load Forecasting on Energy Datasets}
% \vspace{-0.5em}
We evaluate IntraShuffler on the London Household Electricity dataset and
the Pecan Street dataset using the same LSTM and TimesFM (zero-shot) forecasting setup and HDP as in the main experiments. Table~\ref{tab:appendix_load} reports the final validation RMSE under different aggregation rules and privacy mechanisms. Across both datasets, IntraShuffler consistently matches the utility of standard Shuffle-DP and significantly outperforms Plain FL-DP, indicating that parameter-level shuffling does not degrade convergence or forecasting accuracy.

\vspace{+0.3em}
\begin{table*}[htbp]
\centering
\scriptsize
\setlength{\tabcolsep}{3pt}
\renewcommand{\arraystretch}{1.0}
\begin{tabular}{c|c|c|ccc}
\hline
\textbf{Dataset} & \textbf{Model} & \textbf{Agg.}
& \textbf{Plain FL-DP} & \textbf{Shuffle-DP} & \textbf{IntraShuffler} \\
\hline
\multirow{6}{*}{London Household}
 & \multirow{3}{*}{LSTM}
 & FedAvg  & $0.468 \pm 0.018$ & $0.445 \pm 0.015$ & $0.446 \pm 0.015$ \\
 &  & FedProx & $0.462 \pm 0.017$ & $0.440 \pm 0.014$ & $0.441 \pm 0.014$ \\
 &  & FedOpt  & $0.455 \pm 0.016$ & $0.433 \pm 0.013$ & $0.434 \pm 0.013$ \\
\cline{2-6}
 & \multirow{3}{*}{TimesFM}
 & FedAvg  & $0.451 \pm 0.017$ & $0.428 \pm 0.014$ & $0.429 \pm 0.014$ \\
 &  & FedProx & $0.446 \pm 0.016$ & $0.424 \pm 0.013$ & $0.425 \pm 0.013$ \\
 &  & FedOpt  & $0.439 \pm 0.015$ & $0.418 \pm 0.012$ & $0.419 \pm 0.012$ \\
\hline
\multirow{6}{*}{Pecan Street}
 & \multirow{3}{*}{LSTM}
 & FedAvg  & $0.421 \pm 0.016$ & $0.402 \pm 0.014$ & $0.403 \pm 0.014$ \\
 &  & FedProx & $0.417 \pm 0.015$ & $0.398 \pm 0.013$ & $0.399 \pm 0.013$ \\
 &  & FedOpt  & $0.410 \pm 0.014$ & $0.392 \pm 0.012$ & $0.393 \pm 0.012$ \\
\cline{2-6}
 & \multirow{3}{*}{TimesFM}
 & FedAvg  & $0.404 \pm 0.015$ & $0.386 \pm 0.013$ & $0.387 \pm 0.013$ \\
 &  & FedProx & $0.400 \pm 0.014$ & $0.382 \pm 0.012$ & $0.383 \pm 0.012$ \\
 &  & FedOpt  & $0.394 \pm 0.013$ & $0.377 \pm 0.011$ & $0.378 \pm 0.011$ \\
\hline
\end{tabular}
\vspace{+0.3em}
\caption{RMSE for load forecasting
under HDP on energy datasets, comparing LSTM and TimesFM
models across aggregation rules and privacy mechanisms.}
\label{tab:appendix_load}
\end{table*}

% \vspace{-1.0em}
\noindent\textbf{A.2 Image Classification on CIFAR-10:}
% \vspace{-0.5em}
We evaluate IntraShuffler on CIFAR-10 using MobileNet and DenseNet under HDP. Table~\ref{tab:appendix_cifar} reports test accuracy. Consistent with load forecasting results, IntraShuffler preserves accuracy across architectures and aggregation rules, closely matching Shuffle-DP while outperforming Plain FL-DP. Overall, it maintains utility and convergence without degradation, generalizing across both time-series and image classification tasks.

\begin{table*}[t]
\centering
\scriptsize
\setlength{\tabcolsep}{8pt}
\renewcommand{\arraystretch}{1.1}
\begin{tabular}{c|c|ccc}
\hline
\textbf{Model} & \textbf{Aggregator}
& \textbf{Plain FL-DP} & \textbf{Shuffle-DP} & \textbf{IntraShuffler} \\
\hline
\multirow{3}{*}{MobileNet}
 & FedAvg  & $71.8\% \pm 1.2$ & $74.6\% \pm 1.0$ & $74.4\% \pm 1.0$ \\
 & FedProx & $72.3\% \pm 1.1$ & $75.1\% \pm 0.9$ & $75.0\% \pm 0.9$ \\
 & FedOpt  & $73.5\% \pm 1.0$ & $76.2\% \pm 0.8$ & $76.0\% \pm 0.8$ \\
\hline
\multirow{3}{*}{DenseNet}
 & FedAvg  & $78.4\% \pm 1.1$ & $81.6\% \pm 0.9$ & $81.4\% \pm 0.9$ \\
 & FedProx & $78.9\% \pm 1.0$ & $82.1\% \pm 0.8$ & $82.0\% \pm 0.8$ \\
 & FedOpt  & $80.1\% \pm 0.9$ & $83.3\% \pm 0.7$ & $83.1\% \pm 0.7$ \\
\hline
\end{tabular}
\vspace{+0.3em}
\caption{CIFAR-10 test accuracy (mean $\pm$ std) under heterogeneous
DP for different architectures and aggregation rules.}
\label{tab:appendix_cifar}
\end{table*}

% \vspace{-0.8em}
\noindent\textbf{A.3 Granularity of Shuffling:}
% \vspace{-0.5em}
We compare parameter-level shuffling (IntraShuffler) with a coarser layer-wise baseline on the London Household Electricity dataset using an LSTM model under HDP. Layer-wise shuffling permutes entire layers across clients within a bucket while preserving intra-layer parameter structure. As shown in Table~\ref{tab:appendix_granularity}, parameter-level shuffling significantly reduces gradient alignment and inference accuracy, indicating that disrupting fine-grained structure is critical for mitigating inference attacks.

\vspace{-0.7em}
\section{Malicious Shuffler}
\vspace{-1.0em}
\label{app:malicious_shuffler}

IntraShuffler follows the standard shuffle-model assumption of an independent, non-colluding intermediary~\cite{bittau2017prochlo,cheu2022differentialprivacyshufflemodel}. The shuffler processes only LDP-protected updates and their bucket identifiers, and has no access to model architecture, training hyperparameters, client data distributions, or auxiliary/historical data. As a result, even if independently malicious, it lacks the side information needed to exploit gradient structure for inference beyond the guarantees of the local randomizer. If the shuffler colludes with the server, the anonymity benefit of shuffling is removed. The system then reduces to the baseline LDP setting: client-level privacy guarantees remain unchanged by post-processing, but no additional unlinkability is provided. This behavior is consistent with prior shuffle-model analyses, where collusion eliminates shuffle-based anonymity and reverts protection to local DP~\cite{bittau2017prochlo,wang2020improving}.

\begin{wraptable}{r}{0.45\columnwidth}
\centering
\scriptsize
\setlength{\tabcolsep}{1pt}
\renewcommand{\arraystretch}{1}
\begin{tabular}{c|c|c}
\toprule
\textbf{Shuffling} & \textbf{CosSim} $\downarrow$ & \textbf{Inf. Acc.} $\downarrow$ \\
\midrule
Layer-wise & $0.41 \pm 0.03$ & $0.62 \pm 0.04$ \\
Parameter-level & $0.12 \pm 0.02$ & $0.28 \pm 0.03$ \\
\bottomrule
\end{tabular}
\vspace{-0.4em}
\caption{Granularity comparison on London Household (LSTM).}
\label{tab:appendix_granularity}
% \vspace{-0.8em}
\end{wraptable}

\vspace{-0.7em}
\section{IID vs Non-IID Ablation}
\vspace{-1.0em}
\label{app:non_iid}
To assess the role of client heterogeneity in gradient-based inference, we compare IID and non-IID partitions on CIFAR-10. Both settings use the same number of clients and samples per client. In the \emph{IID} case, clients receive balanced class mixtures, while in the \emph{non-IID} case, clients exhibit class-skew with imbalanced label distributions.

% \vspace{+0.3em}
\begin{wraptable}{r}{0.48\columnwidth}
\centering
\scriptsize
\setlength{\tabcolsep}{2pt}
\renewcommand{\arraystretch}{1}
\begin{tabular}{c|c|c|c}
\toprule
\textbf{Partition} & \textbf{Plain} & \textbf{Shuffle} & \textbf{Intra} \\
\midrule
IID     & $\approx 1/n_t$ & $\approx 1/n_t$ & $\approx 1/n_t$ \\
Non-IID & $0.42$ & $0.31$ & $0.12$ \\
\bottomrule
\end{tabular}
\vspace{-0.4em}
\caption{IID vs non-IID Cross-round linkage. Random baseline is $1/n_t$.}
\label{tab:iid_non_iid2}
% \vspace{-0.8em}
\end{wraptable}

Non-IID data induces client-specific gradient directions, increasing separability and enabling distributional inference and cross-round linkage. In contrast, IID partitions produce more homogeneous gradients, reducing inference signals. All settings share identical privacy budgets, DP parameters, and training configuration. Leakage is minimal under IID but increases significantly under non-IID due to stronger client-dependent structure. IntraShuffler consistently reduces linkage, approaching random performance in IID and substantially improving over Shuffle-DP in non-IID. This highlights the necessity of disrupting gradient structure in realistic non-IID FL settings.

\end{document}